\def\eqref#1{equation~\ref{#1}}
\def\1{\bm{1}}
\DeclareMathAlphabet{\mathsfit}{\encodingdefault}{\sfdefault}{m}{sl}
\SetMathAlphabet{\mathsfit}{bold}{\encodingdefault}{\sfdefault}{bx}{n}
\newtheorem{lemma}{Lemma}
\newtheorem{theorem}{Theorem}
\newtheorem{definition}{Definition}
\title{Contrastive Explanations for Reinforcement Learning via Embedded Self Predictions}
\author{%
  Zhengxian Lin\\
  Department of EECS\\
  Oregon State University\\
  \texttt{linzhe@oregonstate.edu} \\
  % examples of more authors
  \And
  Kim-Ho Lam \\
  Department of EECS\\
  Oregon State University\\
  \texttt{lamki@oregonstate.edu} \\
   \AND
  Alan Fern\\
  Department of EECS\\
  Oregon State University\\
  \texttt{Alan.Fern@oregonstate.edu} \\
  % \And
  % Coauthor \\
  % Affiliation \\
  % Address \\
  % \texttt{email} \\
  % \And
  % Coauthor \\
  % Affiliation \\
  % Address \\
  % \texttt{email} \\
}
\begin{document}

\maketitle

\vspace{-1.5em}
\begin{abstract}

We investigate a deep reinforcement learning (RL) architecture that supports explaining why a learned agent prefers one action over another. The key idea is to learn action-values that are directly represented via human-understandable properties of expected futures. This is realized via the embedded self-prediction (ESP) model, which learns said properties in terms of human provided features. Action preferences can then be explained by contrasting the future properties predicted for each action. To address cases where there are a large number of features, we develop a novel method for computing minimal sufficient explanations from an ESP. Our case studies in three domains, including a complex strategy game, show that ESP models can be effectively learned and support insightful explanations.

\end{abstract}

\section{Introduction}
\label{sec:introduction}
% \vspace{-0.5em}

Traditional RL agents explain their action preference by revealing action $A$ or $B$'s predicted values, which provide little insight into its reasoning. Conversely, a human might explain their preference by contrasting meaningful properties of the predicted futures following each action. In this work, we develop a model allowing RL agents to explain action preferences by contrasting human-understandable future predictions. Our approach learns deep generalized value functions (GVFs) \citep{sutton11} to make the future predictions, which are able to predict the future accumulation of arbitrary features when following a policy. Thus, given human-understandable features, the corresponding GVFs capture meaningful properties of a policy's future trajectories. %In this work, we assume that a sufficient and meaningful set of such features are provided.  

To support sound explanation of action preferences via GVFs, it is important that the agent uses the GVFs to form preferences. To this end, our first contribution is the \emph{embedded self-prediction (ESP) model}, which: 1) directly ``embeds" meaningful GVFs into the agent's action-value function, and 2) trains those GVFs to be ``self-predicting" of the agent's Q-function maximizing greedy policy. This enables meaningful and sound contrastive explanations in terms of GVFs. However, this circularly defined ESP model, i.e. the policy depends on the GVFs and vice-versa, suggests training may be difficult. Our second contribution is the ESP-DQN learning algorithm, for which we provide theoretical convergence conditions in the table-based setting and demonstrate empirical effectiveness.  

Because ESP models combine embedded GVFs non-linearly, comparing the contributions of GVFs to preferences for explanations can be difficult. Our third contribution is a novel application of the integrated gradient (IG) \citep{sundararajan2017} for producing explanations that are sound in a well-defined sense. To further support cases with many  features, we use the notion of minimal sufficient explanation \citep{juozapaitis2019}, which can significantly simplify explanations while remaining sound. Our fourth contribution is case studies in two RL benchmarks and a complex real-time strategy game. These demonstrate insights provide by the explanations including both validating and finding flaws in reasons for preferences. 

{\bf In Defense of Manually-Designed Features.} It can be controversial to provide deep learning algorithms with engineered meaningful features. The key question is whether the utility of providing such features is worth the cost of their acquisition. We argue that for many applications that can benefit from informative explanations, the utility will outweigh the cost. Without meaningful features, explanations must be expressed as visualizations on top of lower-level perceptual information (e.g. saliency/attention maps). Such explanations have utility, but they may not adequately relate to human-understandable concepts, require subjective interpretation, and can offer limited insight. Further, in many applications, meaningful features already exist and/or the level of effort to acquire them from domain experts and AI engineers is reasonable. It is thus important to develop deep learning methods, such as our ESP model, that can deliver enhanced explainability when such features are available.  %Importantly, note that while the ESP model leverages meaningful features, it also leverages automatic deep representation learning, since ``feature free" learning is used to map raw inputs to the internal GVFs. 

\section{Embedded Self-Prediction Model}
\label{sec:esp}
% \vspace{-0.5em}

An MDP is a tuple $\langle S, A, T, R\rangle$, with states $S$, actions $A$, transition function $T(s,a,s')$, and reward function $R(s,a)$. A policy $\pi$ maps states to actions and has Q-function $Q^{\pi}(s,a)$ giving the expected infinite-horizon $\beta$-discounted reward of following $\pi$ after taking action $a$ in $s$. The optimal policy $\pi^*$ and Q-function $Q^*$ satisfy $\pi^*(s)=\arg\max_a Q^*(s,a)$. $Q^*$ can be computed given the MDP by repeated application of the \emph{Bellman Backup Operator}, which for any Q-function $Q$, returns a new Q-function $B[Q](s,a) = R(s,a) + \beta\sum_{s'} T(s,a,s')\max_{a'} Q(s',a')$.

We focus on RL agents that learn an approximation $\hat{Q}$ of $Q^*$ and follow the corresponding greedy policy $\hat{\pi}(s)=\arg\max_a \hat{Q}(s,a)$. We aim to explain a preference for action $a$ over $b$ in a state $s$, i.e. explain why $\hat{Q}(s,a) > \hat{Q}(s,b)$. Importantly, the explanations should be meaningful to humans and soundly reflect the actual agent preferences. Below, we define the embedded self-prediction model, which will be used for producing such explanations (Section \ref{sec:explanations}) in terms of generalized value functions.

{\bf Generalized Value Functions (GVFs).} GVFs \citep{sutton11} are a generalization of traditional value functions that accumulate arbitrary feature functions rather than reward functions. Specifically, given a policy $\pi$, an $n$-dimensional state-action feature function $F(s,a) = \langle f_1(s,a), \ldots, f_n(s,a)\rangle$, and a discount factor $\gamma$, the corresponding $n$-dimensional GVF, denoted $Q^{\pi}_F(s,a)$, is the expected infinite-horizon $\gamma$-discounted accumulation of $F$ when following $\pi$ after taking $a$ in $s$. Given an MDP, policy $\pi$, and features function $F$, the GVF can be computed by iterating the \emph{Bellman GVF operator}, which takes a GVF $Q_F$ and returns a new GVF $B_F^{\pi}[Q_F](s,a) = F(s,a) + \gamma\sum_{s'} T(s,a,s')Q_F(s',\pi(s'))$.

%The GVF component for feature $i$ will be denoted by $Q^{\pi}_{f_i}(s,a)$. In anology to value functions, GVFs satisfy the relation
% \begin{equation}
% \label{eq:gvf}
%     Q^{\pi}_F(s,a) = F(s,a) + \gamma \sum_{s'} T(s,a,s')\cdot Q^{\pi}_F(s',\pi(s')),
% \end{equation}
%For simplicity this definition uses the same discount factor for all components of $F$, but in general, different factors could be used across components. Note that when $F$ corresponds to reward, i.e. $F(s,a)=R(s,a)$, and $\gamma$ is the desired reward discount factor, then $Q^{\pi}_F$ is the traditional action-value function of $\pi$. 
%
To produce human-understandable explanations, we assume semantically-meaningful features are available, so that the corresponding GVFs describe meaningful properties of the expected future---e.g., expected energy usage, or time spent in a particular spatial region, or future change in altitude. 

{\bf ESP Model Definition.} Given policy $\pi$ and features $F$, we can contrast actions $a$ and $b$ via the GVF difference $\Delta^{\pi}_F(s,a,b) = Q^{\pi}_F(s,a) - Q^{\pi}_F(s,b)$, which may highlight meaningful differences in how the actions impact the future. Such differences, however, cannot necessarily be used to soundly explain an agent preference, since the agent may not consider those GVFs. Thus, the ESP model forces agents to directly define action values, and hence preferences, in terms of GVFs of their own policies, which allows for such differences to be used soundly.  

\begin{figure}[t]
    % \vspace{-1em}
    \centering
    \includegraphics[width=\textwidth]{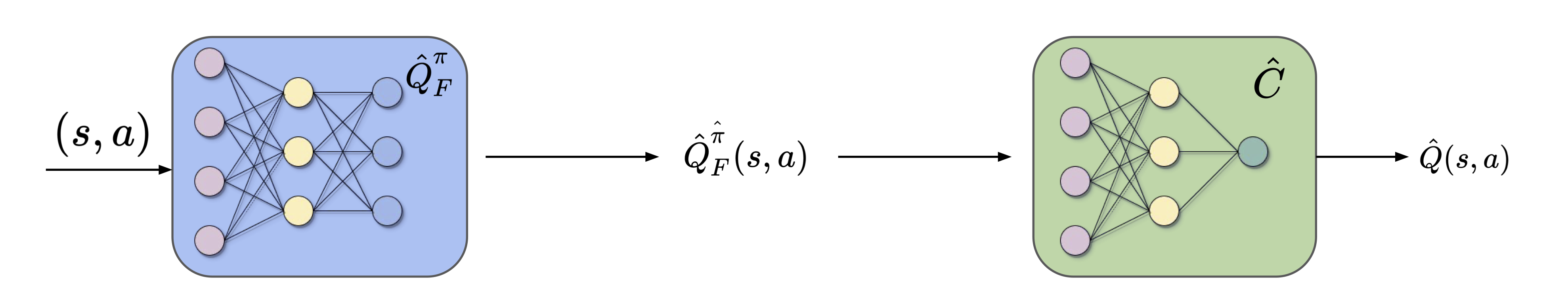}
%         \caption{Lunar Lander}

\caption{The ESP model provides a estimate of the agent's Q-function for any state-action pair. The model first maps a state-action pair $(s,a)$ to a GVF vector $\hat{Q}^{\hat{\pi}}_F$ of the agent's greedy policy $\hat{\pi}(s)=\hat{Q}(s,a)$. This vector is then processed by the combining function $\hat{C}$, which produces a Q-value estimate $\hat{Q}(s,a)$. The embedded GVF is self-predicting in the sense that it is predicting values of the greedy policy for which it is being used to compute.}
\label{fig:ESP_model}
\vspace{-1em}
\end{figure}

As depicted in Figure \ref{fig:ESP_model}, the ESP model embeds a GVF $Q^{\hat{\pi}}_F$ of the agent's greedy policy $\hat{\pi}$ into the agents Q-function $\hat{Q}$, via $\hat{Q}(s,a) = \hat{C}(\hat{Q}_F(s,a))$, where $\hat{C} : R^n \rightarrow R$ is a learned combining function from GVF vectors to action values. When the GVF discount factor $\gamma$ is zero, the ESP model becomes a direct combination of the features, i.e. $\hat{Q}(s,a)=\hat{C}(F(s,a))$, which is the traditional approach to using features for function approximation. By using $\gamma>0$ we can leverage human-provided features in a potentially more powerful way. Because an ESP agent represents action-values via GVF components, it is possible to produce sound contrastive explanations in terms of GVFs, as described in Section \ref{sec:explanations}. 

In general, the ability to learn a quality Q-function and hence policy using the ESP model requires that the GVF features are sufficiently expressive. While, in concept, using a single feature equal to the reward is sufficient for learning the Q-function (i.e. the GVF is the Q-function and the identity combining function could be used), that choice does not support explainability. Thus, it is desirable to use a set of features that meaningfully decompose important aspects of the environment and at the same time have GVFs that are expressive enough to combine into the Q-function. In Section \ref{sec:case-studies}, we describe the generic schema used for GVF features in our experimental environments. 

% way and even make use of features that would have been useless in the traditional $\gamma=0$ model. For example, consider the constant feature such that $f_i(s,a)=1$ for non-terminal states and $0$ for terminals, which provides no information to a traditional model. The ESP model will use this feature to estimate the expected episode length, which may strongly relate to value. 

% Because the ESP model forces the agent to value actions in terms of the GVF components, it opens the possibility of soundly computing contrastive explanations in terms of those meaningful GVFs. In Section \ref{sec:explanations}, we will describe how to compute contrastive explanations for the ESP model, which is non-trivial for non-linear combination functions and high-dimensionality GVFs. 

% \vspace{-0.5em}
\section{ESP Model Training: ESP-DQN}
\label{sec:training}
% \vspace{-0.5em}

%Traditional models ($\gamma=0$) simply learn a combination function over a fixed feature function $F(s,a)$. 

We will represent the learned combining function, $\hat{C}$, and GVF, $\hat{Q}_F$, as neural networks with parameters $\theta_C$ and $\theta_F$. The goal is to optimize the parameters so that $\hat{Q}(s,a)=\hat{C}(\hat{Q}_F(s,a))$ approximates $Q^*$ and $\hat{Q}_F(s,a)$ approximates $Q^{\pi^*}_F(s,a)$. The GVF accuracy condition is important since humans will interpret the GVF values in explanations. A potential learning complication is the circular dependence where $Q^{\hat{\pi}}_F$ is both an input to $\hat{Q}$ and depends on $\hat{Q}$ through the greedy policy $\hat{\pi}$. Below we overview our learning algorithm, \emph{ESP-DQN}, a variant of DQN \citep{mnih2015}, which we later show to be empirically effective. Full pseudo-code is provided in the Appendix \ref{sec:ESP_algorithm}. 

%We will represent both  the learned combining function, $\hat{C}$, and the learned GVF, $\hat{Q}_F$, as neural networks with parameters $\theta_C$ and $\theta_F$ respectively. The learned Q-function is then $\hat{Q}(s,a)=\hat{C}(\hat{Q}_F(s,a))$ with $\hat{\pi}$ being the greedy policy. The goal is to optimize $\theta_F$ so that $\hat{Q}_F$ is a good approximation to $Q^{\hat{\pi}}_F$ and $\theta_C$ so that $\hat{Q}$ is a good approximation of $Q^*$.  

ESP-DQN follows an $\epsilon$-greedy exploration policy while adding transitions to a replay buffer $D=\{(s_i,a_i,r_i,F_i,s'_i)\}$, where $F_i$ is the feature vector for GVF training. Each learning step updates $\theta_C$ and $\theta_F$ using a random mini-batch. Like DQN, updates are based on a \emph{target network}, which uses a second set of \emph{target parameters} $\theta'_C$ and $\theta'_F$, defining target combining and GVF functions $\hat{C}'$ and $\hat{Q}'_F$, yield target Q-function $\hat{Q}'(s,a)=\hat{C}'(\hat{Q}'_F(s,a))$. The target parameters are updated to the values of the non-target parameters every $K$ learning steps and otherwise held fixed.
%\footnote{Our implementation also supports ``soft target updates" \citep{lillicrap2015}, where at each step the target network parameters are gradually moved toward the currently learned parameters via a mixing proportion. This can sometimes lead to more stable learning.}

{\bf Combination Function Update.} Since the output of $\hat{C}$ should approximate $Q^*$, optimizing $\theta_C$ can use traditional DQN updates. The updates, however, only impact $\theta_C$ while keeping $\theta_F$ fixed so that the GVF output $\hat{Q}_F(s,a)$ is viewed as a fixed input to $\hat{C}$. Given a mini-batch the update to $\theta_C$ is based on L2 loss with a target value for sample $i$ being $y_i = r_i + \beta \hat{Q}'(s'_i, \hat{a}'_i)$, where $\hat{a}'_i = \arg\max_a \hat{Q}'(s',a)$ is the greedy action of the target network. 

{\bf GVF Update.} Training $Q^{\pi}_F$ is similar to learning a critic in actor-critic methods for the evolving greedy policy, but instead of learning to predict long-term reward, we predict the long-term accumulation of $F$. Given a mini-batch we update $\theta_F$ based on L2 loss at the output of $\hat{Q}_F$ with respect to a target value $y_i = F_i + \gamma \hat{Q}'_F(s'_i, \hat{a}'_i)$, where $\hat{a}_i$ is the same target greedy action from above. 
%Notice that the only connection this update has to $\hat{C}$ is through the greedy agent action $\hat{a}_i$.

{\bf Convergence.} Even with sufficiently expressive features, most combinations of function approximation and Q-learning, including DQN, do not have general convergence guarantees \citep{sutton_and_barto2018}. Rather, for table-based representations that record a value for each state-action pair, Q-learning, from which DQN is derived, almost surely converges to $Q^*$ \citep{watkins1992}, which at least shows that DQN is built on sound principles. We now consider convergence for ESP-Table, a table-based analog of ESP-DQN. 

ESP-Table uses size 1 mini-batches and updates target tables (i.e. analogs of target networks) every $K$ steps. The $\hat{Q}_F$ table is over state-action pairs, while for $\hat{C}$ we assume a hash function $h$ that maps its continuous GVF inputs to a finite table. For example, since GVFs are bounded, this can be done with arbitrarily small error via quantization. A pair of feature and hash function $(F, h)$ must be sufficiently expressive to provide any convergence guarantee. First, we assume $h$ is \emph{locally consistent}, meaning that for any input $q$ there exists a finite $\epsilon$ such that for all $|q' - q|\leq \epsilon$, $h(q)=h(q')$. Second, we assume the pair $(F, h)$ is \emph{Bellman Sufficient}, which characterizes the representational capacity of the $\hat{C}$ table after Bellman GVF backups (see Section \ref{sec:esp}) with respect to representing Bellman backups.
\begin{definition}[Bellman Sufficiency]
A feature and hash function pair $(F, h)$ is \emph{Bellman sufficient} if for any ESP model $\hat{Q}(s,a)=\hat{C}(\hat{Q}_F(s,a))$ with greedy policy $\hat{\pi}$ and state-action pairs $(s,a)$ and $(x,y)$, if $h(\hat{Q}^+_F(s,a)) = h(\hat{Q}^+_F(x,y))$ then $B[\hat{Q}](s,a)=B[\hat{Q}](x,y)$, where $\hat{Q}^+_F = B_F^{\hat{\pi}}[\hat{Q}_F]$.
\end{definition}

Let $\hat{C}^t$, $\hat{Q}_F^t$, $\hat{Q}^t$, and $\hat{\pi}^t$ be random variables denoting the learned combining function, GVF, corresponding Q-function, and greedy policy after $t$ updates. The following gives conditions for convergence of $\hat{\pi}^t$ to $\pi^*$ and $\hat{Q}_F^t$ to a neighborhood of $Q^*_F$ given a large enough update interval $K$.  %
\begin{theorem}
If ESP-Table is run under the standard conditions for the almost surely (a.s.) convergence of Q-learning
%\footnote{Specifically, there must be an unbounded number of updates for each state-action pair and the learning rate schedule $\alpha_t$ must satisfy $\sum_t \alpha_t = \infty$ and $\sum_t \alpha_t^2 < \infty$.} 
and uses a Bellman-sufficient pair $(F,h)$ with locally consistent $h$, then for any $\epsilon > 0$ there exists a finite target update interval $K$, such that for all $s$ and $a$, $\hat{\pi}^t(s)$ converges a.s. to $\pi^*(s)$ and $\lim_{t\rightarrow \infty} | \hat{Q}_F^t(s,a) - {Q}_F^*(s,a)|\leq \epsilon$ with probability 1. 
\end{theorem}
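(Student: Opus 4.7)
The plan is to analyze ESP-Table at the \emph{epoch level}, where an epoch is the interval of length $K$ between target-network syncs. During one epoch the target functions $\hat{C}'$, $\hat{Q}'_F$, and hence $\hat{Q}'=\hat{C}'(\hat{Q}'_F)$ and $\hat{\pi}'$, are frozen; every visit to $(s,a)$ updates $\hat{Q}_F(s,a)$ toward the stochastic sample $F(s,a)+\gamma\hat{Q}'_F(s',\hat{\pi}'(s'))$ with expectation $B_F^{\hat{\pi}'}[\hat{Q}'_F](s,a)$, and updates $\hat{C}(h(\hat{Q}_F(s,a)))$ toward $r+\beta\max_{a'}\hat{Q}'(s',a')$ with expectation $B[\hat{Q}'](s,a)$. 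First I would invoke the standard Robbins--Monro/Q-learning step-size and visitation assumptions (which the theorem invokes) to argue that, as the epoch length $K\to\infty$, the per-epoch stochastic approximation drives $\hat{Q}_F(s,a)$ arbitrarily close to $\hat{Q}^+_F(s,a):=B_F^{\hat{\pi}'}[\hat{Q}'_F](s,a)$ for every $(s,a)$. Then, because $h$ is locally consistent, after a finite transient we have $h(\hat{Q}_F(s,a))=h(\hat{Q}^+_F(s,a))$ for all $(s,a)$, so the $\hat{C}$ table subsequently sees a \emph{stable} input partition. Bellman sufficiency guarantees that whenever two pairs $(s,a),(x,y)$ fall in the same bucket the stochastic targets share the same mean $B[\hat{Q}'](s,a)=B[\hat{Q}'](x,y)$, so the $\hat{C}$ update in that bucket is also a consistent Robbins--Monro scheme whose limit is $B[\hat{Q}'](s,a)$.

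Having established per-epoch behavior, I would then index the target quantities by epoch and write the (approximate) epoch-level recurrences
\begin{align*}
(\hat{Q}_F)_{k+1}(s,a) &= B_F^{\hat{\pi}_k}[(\hat{Q}_F)_k](s,a) + \delta_k^F(s,a),\\
\hat{Q}_{k+1}(s,a) &= B[\hat{Q}_k](s,a) + \delta_k^Q(s,a),
\end{align*}
where the residuals $\delta_k^F,\delta_k^Q$ can be made uniformly bounded by any preassigned $\delta(K)$ by enlarging $K$. The second recurrence is exactly (asynchronous) value iteration perturbed by bounded noise; since $B$ is a $\beta$-contraction, $\hat{Q}_k\to Q^*$ up to $O(\delta(K)/(1-\beta))$. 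By choosing $K$ large enough that this residual is strictly smaller than the action gap, the greedy target policy $\hat{\pi}_k$ equals $\pi^*$ from some epoch $k^\star$ onward. From that point the GVF recurrence collapses to $(\hat{Q}_F)_{k+1}=B_F^{\pi^*}[(\hat{Q}_F)_k]+\delta_k^F$, a $\gamma$-contraction toward $Q_F^{\pi^*}=Q_F^*$, yielding $\lim_k\|(\hat{Q}_F)_k-Q_F^*\|\le O(\delta(K)/(1-\gamma))\le\epsilon$ for $K$ sufficiently large. To pass from target-network convergence to the non-target statements in the theorem, I would note that within the final epoch the non-target network converges to the (already $\epsilon$-accurate) target, so $\hat{Q}^t$ lies within the action-gap tolerance of $Q^*$ and $\hat{\pi}^t(s)=\pi^*(s)$ almost surely for $t$ large enough, while $|\hat{Q}^t_F(s,a)-Q_F^*(s,a)|\le\epsilon$.

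The main obstacle I anticipate is making the within-epoch reduction quantitatively honest despite the \emph{circular} coupling between $\hat{Q}_F$ (which determines hash buckets) and $\hat{C}$ (whose updates are routed through those buckets). Early in an epoch, $\hat{Q}_F$ has not yet settled near $\hat{Q}^+_F$, so an update landing in bucket $h(\hat{Q}_F(s,a))$ may actually correspond to what will ultimately be a different bucket $h(\hat{Q}^+_F(s,a))$, corrupting $\hat{C}$ at spurious addresses. Local consistency of $h$ supplies the clean exit from this transient: once the GVF error is below the local-constancy radius, the bucket assignment freezes, after which Bellman sufficiency makes the aggregated target means well-defined and stochastic approximation takes over. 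Quantifying the length of this transient and the residual effect of the wrong-bucket updates on the asymptotic value of $\hat{C}$ (ultimately absorbed into $\delta(K)$) is the technically delicate step; everything afterward reduces to standard contractive-operator arguments and the finite-state Q-learning convergence theorem the hypotheses already guarantee.
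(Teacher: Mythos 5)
Your proposal is correct and follows essentially the same route as the paper's proof: a per-target-interval stochastic-approximation lemma (GVF settles near the Bellman GVF backup, local consistency freezes the hash buckets, Bellman sufficiency makes the bucketed $\hat{C}$ targets share a mean, so each interval implements an $\epsilon$-accurate Bellman backup), followed by an approximate-value-iteration plus action-gap argument to pin the greedy policy to $\pi^*$, and finally a $\gamma$-contraction argument for the GVF under the now-fixed $\pi^*$. The only cosmetic difference is that you bound $\|\hat{Q}_k-Q^*\|_\infty$ directly by $\delta(K)/(1-\beta)$ and compare to the action gap, whereas the paper invokes the standard $\tfrac{2\beta}{(1-\beta)^2}\epsilon$ policy-loss bound for approximate value iteration; both yield the same conclusion.
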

The full proof is in the Appendix \ref{sec:converge-proof}. It is an open problem of whether a stronger convergence result holds for $K=1$, which would be analogous to results for traditional Q-learning.

\vspace{-0.5em}
\section{Contrastive Explanations for the ESP Model}
\label{sec:explanations}
\vspace{-0.5em}

%We focus on contrastive explanation of a preference, $\hat{Q}(s,a) > \hat{Q}(s,b)$, that attribute GVF component differences to the preference magnitude $\hat{Q}(s,a) - \hat{Q}(s,b)$. Explanations will be tuples $\langle \Delta_F(s,a,b), W(s,a,b)\rangle$, with GVF difference vector $\Delta_F(s,a,b) = \hat{Q}_F(s,a) - \hat{Q}_F(s,b)$ and importance weight vector $W(s,a,b) \in R^n$. This form of explanation allows a human to assess whether the GVF differences and weights agree/disagree with their qualitative/quantitative knowledge. Agreement can help build confidence in an agent and disagreement can point to potential flaws. For example, a particular GVF difference $\Delta_{F,i}(s,a,b)$ may not agree with a human's understanding of the environment, or a particular weight $W_i(s,a,b)$ may have the opposite sign of what should be expected.  

We focus on contrastive explanation of a preference, $\hat{Q}(s,a) > \hat{Q}(s,b)$, that decomposes the preference magnitude $\hat{Q}(s,a) - \hat{Q}(s,b)$ in terms of components of the GVF difference vector $\Delta_F(s,a,b) = \hat{Q}_F(s,a) - \hat{Q}_F(s,b)$. Explanations will be tuples $\langle \Delta_F(s,a,b), W(s,a,b)\rangle$, where $W(s,a,b) \in R^n$ is an attribution weight vector corresponding to $\Delta_F(s,a,b)$. The meaningfulness of an explanation is largely determined by the meaningfulness of the GVF features. We say that an explanation is sound if $\hat{Q}(s,a)-\hat{Q}(s,b) = W(s,a,b)\cdot \Delta_F(s,a,b)$, i.e. it accounts for the preference magnitude. We are interested in explanation methods that only return sound explanations, since these explanations can be viewed as certificates for the agent's preferences. In particular, the definition implies that $W(s,a,b)\cdot \Delta_F(s,a,b) > 0$ if and only if $\hat{Q}(s,a) > \hat{Q}(s,b)$. In the simple case of a linear combining function $\hat{C}$ with weights $w \in R^n$, the preference magnitude factors as $\hat{Q}(s,a)-\hat{Q}(s,b) = w\cdot \Delta_F(s,a,b)$. Thus, $\langle \Delta_F(s,a,b),w\rangle$ is a sound explanation for any preference.

{\bf Non-Linear Combining Functions.} Non-linear combining functions are necessary when it is difficult to provide features that support good policies via linear combining functions. Since the above linear factoring does not directly hold for non-linear $\hat{C}$, we draw on the \emph{Integrated Gradient (IG)} \citep{sundararajan2017}, which was originally developed to score feature importance of a single input relative to a ``baseline" input. We adapt IG to our setting by treating the less preferred action as the baseline, which we describe below in the terminology of this paper. 

Let $X_{sa} = \hat{Q}_F(s,a)$ and $X_{sb} = \hat{Q}_F(s,b)$ be the GVF outputs of the compared actions. Given a differentiable combining function $\hat{C}$, IG computes an attribution weight $\theta_i(s,a,b)$ for component $i$ by integrating the gradient of $\hat{C}$ while interpolating between $X_a$ and $X_b$. That is, $\theta_i(s,a,b) = \int_0^1 \frac{\partial \hat{C}(X_{sb} + \alpha\cdot(X_{sa}-X_{sb}))}{\partial X_{sa,i}} \mathrm{d}\alpha$, 
which we approximate via finite differences. The key property is that the IG weights linearly attributes feature differences to the overall output difference, i.e. $\hat{C}(X_{sa})-\hat{C}(X_{sb}) =  \theta(s,a,b)\cdot (X_{sa} - X_{sb})$. Rewriting this gives the key relationship for the ESP model.
\begin{equation}
\label{eq:soundness}
\hat{Q}(s,a)-\hat{Q}(s,b) = \hat{C}(\hat{Q}_F(s,a))-\hat{C}(\hat{Q}_F(s,b)) = \theta(s,a,b)\cdot \Delta_F(s,a,b)
\end{equation} 
Thus, $\mbox{IGX}(s,a,b) = \langle \Delta_F(s,a,b), \theta(s,a,b)\rangle$ is a sound explanation, which generalizes the above linear case, since for linear $\hat{C}$ with weights $w$, we have $\theta(s,a,b) = w$. In practice, we typically visualize $\mbox{IGX}(s,a,b)$ by showing a bar for each component with magnitude $\theta_i(s,a,b)\cdot \Delta_F(s,a,b)$, which reflects the positive/negative contributions to the preference (e.g. Figure \ref{fig:LL_game_1} bottom-right). 

{\bf Minimal Sufficient Explanations.} When there are many features $\mbox{IGX}(s,a,b)$ will likely overwhelm users. To soundly reduce the size, we use the concept of minimal sufficient explanation (MSX), which was recently developed for the much more restricted space of linear reward decomposition models \citep{juozapaitis2019}. Equation \ref{eq:soundness}, however, allows us to adapt the MSX to our non-linear setting. Let $P$ and $N$ be the indices of the GVF components that have positive and negative attribution to the preference, i.e., $P=\{i\;:\; \Delta_{F,i}(s,a,b)\cdot \theta_i(s,a,b) > 0\}$ and $N = \{1,\ldots,n\} - P$. Also, for an arbitrary subset of indices $E$, let $S(E) = \sum_{i\in E} |\Delta_{F,i}(s,a,b)\cdot \theta_i(s,a,b)|$ be the total magnitude of the components, which lets the preference be expressed as $S(P) > S(N)$. The key idea of the MSX is that often only a small subset of positive components are required to overcome negative components and maintain the preference of $a$ over $b$. An MSX is simply a minimal set of such positive components.
% \begin{equation}
% \label{eq:msx}
% \arg\min \{|E| \;:\; E\subseteq P, S(E) > S(N)}\}
% \end{equation}
%
% \begin{definition}[Minimal Sufficient Explanation]
% \label{def:msx}
% Let $\mbox{IGX}(s,a,b)=\langle \Delta_F(s,a,b),W(s,a,b)\rangle$ be a sound explanation for a positive preference $\hat{Q}(s,a) > \hat{Q}(s,b)$. A set of feature indices $E\subseteq \{1,\ldots,n\}$ is a minimal sufficient explanation for $\mbox{IGX}(s,a,b)$ if it is a minimal size subset such that $d^-(s,a,b) < \sum_{i \in E} Delta_{F,i}(s,a,b)_i\cdot W_i(s,a,b)$.
% \end{definition}
Thus, an MSX is a solution to $\arg\min \{|E| \;:\; E\subseteq P, S(E) > S(N)\}$, which is not unique in general. We select a solution that has the largest positive weight by sorting $P$ and including indices into the MSX from largest to smallest until the total is larger than $S(N)$.  

\vspace{-0.5em}
\section{Related Work}
\vspace{-0.5em}

Prior work considered linear reward decomposition models with known weights for speeding up RL \citep{van2017}, multi-agent RL \citep{russell2003,kok2004}, and explanation \citep{juozapaitis2019}. This is a special case of the ESP model, with GVF features equal to reward components and a known linear combining function. Generalized value function networks \citep{schlegel2018} are a related, but orthogonal, model that combines GVFs (with given policies) by treating GVFs as features accumulated by other GVFs. Rather, our GVFs are used as input to a combining network, which defines the policy used for the GVF definition. Integrating GVF networks and the ESP model is an interesting direction to consider.

The MSX for linear models was originally introduced for MDP planning \citep{khan2009} and more recently for reward decomposition \citep{juozapaitis2019}. We extend to the non-linear case. A recent approach to contrastive explanations \citep{waa2018} extracts properties from policy simulations at explanation time \citep{waa2018}, which can be expensive or impossible. Further, the explanations are not sound, since they are not tied to the agent's internal preference computation. Saliency explanations have been used in RL to indicate important parts of input images \citep{pmlr-v80-greydanus18a,DBLP:journals/corr/abs-1809-06061,gupta2020explain,atrey2020exploratory,olson2019counterfactual}. These methods lack a clear semantics for the explanations and hence any notion of soundness. 
%Rather, our approach aims to provide a clear semantics for the explanations that comes with a formal notion of soundness. 

\vspace{-0.5em}
\section{Experimental Case Studies}
\label{sec:case-studies}
 \vspace{-0.5em}

% {\bf NOTE: Zhengxian is working on this. Try to put the learning curve result and explanation in one section.
% Lets organize this section similar to how the reward decomposition paper was organized. A series of case studies in multiple domains. Each domain gives a story of what we were able to discover/learn with the method. 
% \begin{verbatim}
% http://web.engr.oregonstate.edu/~afern/papers/reward_decomposition__workshop_final.pdf
% \end{verbatim}
% }

Below we introduce our domains and experiments, which address these questions: 1) (Section \ref{sec:performance}) Can we learn ESP models that perform as well as standard models? 2) (Section \ref{sec:performance}) Do the learned ESP models have accurate GVFs? 3) (Section \ref{sec:case-study}) Do our explanations provide meaningful insight?  

% \begin{enumerate}
%     \item (Section \ref{sec:performance}) Can we learn ESP models that perform as well as standard models? 
%     \item (Section \ref{sec:performance}) Do the learned ESP models have accurate GVFs for the agent policy? 
%     \item (Section \ref{sec:case-study}) Do our explanations provide meaningful insight into agent preferences? 
% \end{enumerate}
% agentof learning an ESP model compared to a traditional model Each environment has its own GVF and Combiner training complexity. Below we introduce these domainsWe first introduce the environments, GVF features and the motivation of the designing GVF features for our ESP agents. We then compare the learning performance of DQN and ESP agents, all of which share the same structure. Finally, we show several explanation cases using our novel method.
%We then present the learning performance comparing to the DQN agents with same structure of ESP and only combiner.

\subsection{Environment Description} 
{\bf Schema for Selecting GVF Features.} Before introducing the environments we first describe the schema used to select GVF features across these environments. This schema can serve as a general starting point for applying the ESP model to new environments. In general, episodic environments have two main types of rewards: 1) a \emph{terminal reward}, which occurs at the end of an episode and can depend on the final state, and 2) \emph{pre-terminal rewards}, which occur during the episode depending on the states and/or actions. Since the value of a policy will typically depend on both types of rewards, it is important to have GVF features that capture both terminal and pre-terminal that are potentially relevant and interpretable. Thus, in each domain, as describe below, we include simple \emph{terminal GVF features} that describe basic conditions at the end of the episode (e.g. indicating if the cart went out-of-bound in Cartpole). In addition, we include \emph{pre-terminal GVF features} that are obtained from the state variables of the environment or derived reward variables that are used to compute the reward function, which are typically readily available from a domain description. 

Discrete state or reward variables can simply be encoded as indicator GVF features. For continuous state and reward variables we consider two options: a) When a variable has a small number of meaningful regions, we can use indicator features for the regions as features. The GVFs then indicate how long the agent is in each region; b) We also consider delta GVF features that are equal to the change in a variable across a time step. The GVF value for these features can be interpreted as the future change in the variables' values. While we focus on the above generic GVF features in this paper, an agent designer can define arbitrary GVF features based on their intuition and knowledge. 

% \textit{R1)} For an episodic problem, include features that indicate the way that the episode ended, since this is often important for both measuring the value and understanding choices of the agent. Each of our domains are episodic, so we include those in each domain. 
% \textit{R2)} It is also useful to have features that capture meaningful aspects of how the state evolves and/or what parts of the state-space are visited during an episode. For this we have found two types of general features to be worth considering. a) When state variables have a small number of meaningful discrete regions, then using features that are discretized into those regions is useful for interpretability, especially if there is an intuition about understanding what part of the state-space the agent is staying in. b) Using delta features can be useful when there isn't a natural discretization and when there is intuition that we will be more interested in how the trajectory of an agent changes throughout an episode. \textit{R3)} Decomposed-rewards that are part of the reward functions and meaningful for interpretatbility can always be a good feature choice. The GVFs of those features are directly related to the Q-value to provide good information for Q-value learning. Also, pure decomposed-reward features is a specific case of our model, and the ESP model can always learn good on that since the GVF values of those features can be simply and linearly combined to be the Q-value.

{\bf Lunar Lander.} We use the standard OpenAI Gym version of Lunar Lander, a physics simulation game where the agent aims to safely land a rocket ship in a target region by deciding at each step which of three thrusters (if any) to activate. 
%a target on the moon without crashing. [cite OpenAI Gym] We have selected  Lunar Lander as our first domain because its feature vector can be linearly combined allowing for straightforward evaluation combiner learning. We have modified the Lunar Lander environment to return additional game values not included in the API; thereby creating an abstract decomposed 8-dimensional feature vector which reflects changes from the current game step to the next step. We define a step as an interval when observations are sampled from the environment. This feature-vector is suited as GVF features because each is semantically meaningful and can be linearly combined.  [cite openAI gym appendix]
The raw state variables are the position and velocity vectors and the reward function penalizes crashing, rewards landing in the goal area, and includes other "shaping" reward variables. These variables are all easily extracted from the simulation environment. In this domain, the continuous variables do not have intuitively meaningful discretizations, so we use the delta features as the main features for explanation case studies (ESP-continuous). However, to illustrate that learning can be done with the discretization approach in this domain, we also include results for the continuous features being discretized into 8 uniform bins (ESP-discrete). The pre-terminal features are based on the variables distance-to-goal, velocity, tilt-angle, right landing leg in goal position, left landing leg in goal position, main engine use, side engine use. The terminal feature is an indicator of safely landing. 
%In this environment, the continous variables do not have intuitively meaningful descritizations into a small number of regions. Thus, we chose to use the delta feature encoding described above for the continuous variables.  
%{\bf Feature Choice Reason:} The features choice is more straight in Lunar Lander. They are simply obtained from Open AI Gym's reward shaping, \url{https://github.com/openai/gym/blob/master/gym/envs/box2d/lunar_lander.py#L319}. That means the feature values are the decomposed-reward. It simply follows the \textit{R2)b} and \textit{R3)} above.
%$f_4$ is the right leg of Lunar Lander touching the ground(RL). The value is positive(+) for the decision point that touched from untouched, and be negative(-) for the decision point that untouched from touched.
%$f_5$ is the left leg of Lunar Lander touching the ground(LL). The value is positive(+) for the decision point that touched from untouched, and be negative(-) for the point that untouched from touched.
%$f_6$ is main engine cost(MEC). The value always be positive(+) osr $0$.
%$f_7$ is side engine cost(SEC). The value always be positive(+) or $0$.
%$f_8$ is landing on the Moon safely(LD). The value is $+1$ at the end of the game if the rocket land on the Moon safely , otherwise it is $0$.

{\bf Cart Pole.} We use the standard OpenAI Gym Cart Pole environment, a physics simulation where the agent aims to vertically balance a free-swinging pole attached to a cart that can have a force applied to the left or right each step. The state variables include the position, pole angle, and their velocities, and reward is a constant +1 until termination, which occurs when either the pole falls below a certain angle from vertical, moves out of bounds, or after 500 steps.  

Since the CartPole variables discretize into a small number of intuitively meaningful regions, we consider both discrete (ESP-discrete) and delta encodings (ESP-continuous) of the GVF features. For ESP-discrete, there are 8 pre-terminal GVF features that discretize the state variables into meaningful regions corresponding to an intuitive notion of safety. This includes two indicators for each variable corresponding to cart position, cart velocity, pole angle, and angle velocity. A perfectly balanced pole will always remain in the defined safe regions. These discretized features also double as the terminal features, since they capture the relevant aspects of the state at termination (being in a safe or unsafe region). 
%{\bf Feature Choice Reason:} The feature choice are designed following the \textit{R2)a)} based on all the two end conditions since the reward is constant 1 per step until the game end. The first four features cover the first end condition which is the Cart moving out of the game window. So the GVF values of those features indicates the agent maintaining the Cart's position close to the middle of game window (safe) or close to the edges of game window(dangerous). The last four features cover the second end condition which is the Pole falling down and exceed 24 degree. So, the GVF value of those features indicates the agent holding the Pole with a smaller angle (safe) or a bigger angle (dangerous). Basically, the GVFs of all these features provide enough information of estimating ``when the game end" for representing the Q-value which is ``how long (discounted) the agent will survive".
For ESP-continuous, we have 12 features, the first 8 pre-terminal GVF features corresponding to the delta features of the cart position, cart velocity, pole angle, and angle velocity for left and right sides. The 4 terminal GVF features are indicators of whether the episode ended by moving out-of-bounds to the left or right or the pole fell below the termination agle to the left or right.  

{\bf Tug of War.} Tug of War (ToW) is an adversarial two-player strategy game we designed using PySC2 for Starcraft 2. ToW is interesting for humans and presents many challenges to RL including an enormous state space, thousands of actions, long horizons, and sparse reward (win/loss). 

ToW is played on a rectangular map divided into top and bottom horizontal lanes. Each lane has two bases structures at opposite ends, one for each player. The first player to destroy one of the opponent's bases in either lane wins. The game proceeds in 30 second waves. By the beginning of each wave, players must decide on either the top or bottom lane, and how many of each type of military production building to purchase for that lane. Purchases are constrained by the player's available currency, which is given at a fixed amount each wave. Each purchased building produces one unit of the specified type at the beginning of each wave. The units move across the lanes toward the opponent, engage enemy units, and attack the enemy base if close enough. The three types of units are Marines, Immortals, and Banelings, which have a rock-paper-scissors relationship and have different costs. If no base is destroyed after 40 waves, the player with the lowest base health loses. In this work, we trained a single agent against a reasonably strong agent produced via pool-based self-play learning (similar to AlphaStar training\citep{vinyals2019}).

% which occur every 30 seconds. Before the next wave begins, players may select either the top or bottom lane for which to purchase some number of military-unit production buildings with their available currency. There are three types of production buildings that produce 1 corresponding unit of that type; we refer to these units as Marines, Immortals, and Banelings. These three unit types have strengths and weaknesses that form a rock-paper-scissors relationship with each other. At the beginning of each wave, units for both players are simultaneously produced from the buildings purchased up to that time. The first player to destroy one of their opponent's base within 40 waves wins the game. If no base is destroyed after 40 waves, the player with the lowest base health (HP) loses. We present further details including the game's unit and economic dynamics in our supplementary material.

We present two ToW ESP agents that use 17 and 131 structured GVF features (noting that the 131 features are very sparse). These feature sets are detailed in Appendix \ref{sec:Tug-of-War-131-Features}. For the 17 feature agent, the pre-terminal features correpsond to the delta damage to each of the four bases by each of the three types of units; allowing GVFs to predict the amount of base damage done by each type of unit, giving insight into the strategy. Note that there is no natural discretization of the numeric damage variables and hence we only consider the delta encoding. The terminal GVF features are indicators of which base has the lowest health at the end of the game and whether the game reached 40 waves. The terminal GVF features encode the possible ways that the game can end. 
%{\bf Feature Choice Reason:} The features are also designed base on the end conditions since the reward only happened at the end of the game. There are two of them: 1. One of Nexus has been destroyed before reaching 40 waves. So the feature 1 to 12, i.e. percentage of units damaging to each Nexus, provide information about which Nexus will be destroyed, following \textit{R1)}. 2. The game will be end if it reaches the tiebreaker (40 waves), and the one of nexus has lowest HP decide which side win/loss. So, following the \textit{R2)a}, the features 13 - 17 cover this end condition by estimating the probability distribution of which nexus will has lowest HP at tie-breaker point directly and the probability of the game ended by tie-breaker.
The 131 feature agent extends these features to to keep track of damage done in each lane to and from each combination of unit types along with additional information about the economy.

\subsection{Learning Performance}
\label{sec:performance}

% Describe any details of training that may be important but point to supplementary for all details. Describe the procedure for producing learning curves for performance and what we compare against. Talk about the curves for the three environments. Then introduce the approach for measuring GVF accuracy and measuring the curves. Then talk about those results. Perhaps there is something interesting about the relation between the two sets of curves. Combining both sets of curves into one figure with a single caption. 

 %We have found these two optimizers to be efficient for the agent's learning.

%We now evaluate the effectiveness of the ESP-DQN algorithm for training ESP models. 
To evaluate whether using ESP models hurts performance relative to ``standard" models we compare against two DQN instances: \emph{DQN-full} uses the same overall network architecture as ESP-DQN, i.e. the GVF network structure feeding into the combining network. However, unlike ESP-DQN, the DQN-full agent does not have access to GVF features and does not attempt to train the GVF network explicitly. It is possible DQN-full will suffer due to the bottleneck introduced at the interface between the GVF and combiner networks. Thus, we also evaluate \emph{Vanilla DQN}, which only uses the combining network of ESP-DQN, but directly connects that network to the raw agent input. Details of network architectures, optimizers, and hyperparameters are in the Appendix \ref{sec:detail-ESP-agents}.

Figure \ref{fig:learning-curves} (top row) shows the learning curves for  different agents and for the random policy. All curves are averages of 10 full training runs from scratch using 10 random seeds. For the control problems, CartPole (with discrete and continous GVFs features) and LunarLander, we see that all agents are statistically indistinguishable near the end of learning and reach peak performance after about the same amount of experience. This indicates that the potential complications of training the ESP model did not significantly impact performance in these domains. We see that the discrete feature version of CartPole converged slightly faster than the continous version, but the difference is relatively small.  %We do see that in CartPole ESP-DQN has significantly less variance (close to zero) at later stages of learning compared to the DQN variants. This suggests that in some cases ESP-DQN may be able to exploit the provided features and GVF learning for more stable training. 
For ToW, the ESP-DQN agents perform as well or better than the DQN variants, with all agents showing more variance. ESP-DQN with 17 features consistently converges to a win rate of nearly 100\% and is more stable than the 131-feature version and other DQN variants. %The more complex model involved with 131 features apparently requires additional experience for learning to stabilize. 
Interestingly, DQN-full with 17 features consistently fails to learn, which we hypothesize is due to the extreme 17 feature bottleneck inserted into the architecture. This is supported by seeing that with 131 features DQN-full does learn, though more slowly than ESP-DQN. %DQN-full, unlike ESP-DQN is not able to leverage provided features and GVF semantics to explicitly train the bottleneck. This hypothesis is supported by the fact that the 131 feature version of DQN-full, with a much larger bottleneck is able to learn, though more slowly than ESP-DQN. 

To evaluate the GVF accuracy of ESP-DQN we produce ground truth GVF data along the learning curves. Specifically, given the ESP policy $\hat{\pi}$ at any point, we can use Monte-Carlo simulation to estimate $Q_F^{\hat{\pi}}(s,a)$ for all actions at a test set of states generated by running $\hat{\pi}$. % Specifically, to estimate $Q^{\pi}(s,a)$ we simulate taking action $a$ in state $s$ and then running $\pi$ for the remaining episode. The discounted cumulative feature value is then used as ground truth for the GVF prediction. 
Figure \ref{fig:learning-curves} (bottom row) shows the mean squared GVF prediction error on the test sets as learning progresses. First, for each domain the GVF error is small at the end of learning and tends to rapidly decrease when the policy approaches its peak reward performance. 
%which indicates that the model has successfully learned the GVFs of its own policy as desired. Comparing to the reward learning curves, the GVF error appears to rapidly decrease once the agent's reward performance is close to stabilizing, indicating a stable policy to learn from. 
LunarLander and ToW show a continual decrease of GVF error as learning progresses. CartPole, rather shows a sharp initial increase then sharp decrease. This is due to the initially bad policy always failing quickly, which trivializes GVF prediction. As the policy improves the GVFs become more challenging to predict leading to the initial error increase. 

\begin{figure}[t]
    % \vspace{-1em}
     \centering
     \begin{subfigure}[b]{0.3\textwidth}
         \centering
         \includegraphics[width=\textwidth]{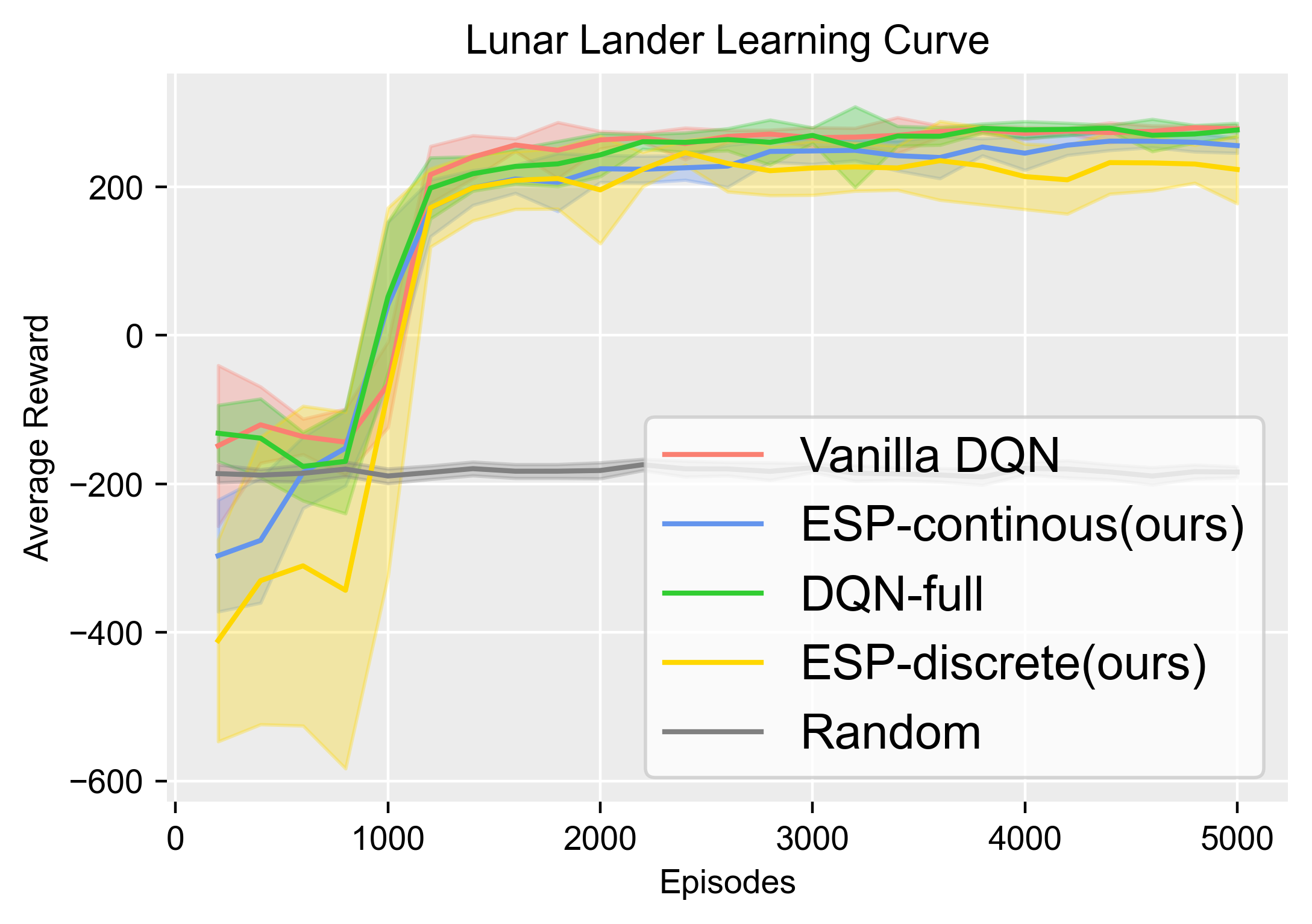}
%         \caption{Lunar Lander}
         \label{fig:LL_LC}
     \end{subfigure}
     \hfill
     \begin{subfigure}[b]{0.3\textwidth}
         \centering
         \includegraphics[width=\textwidth]{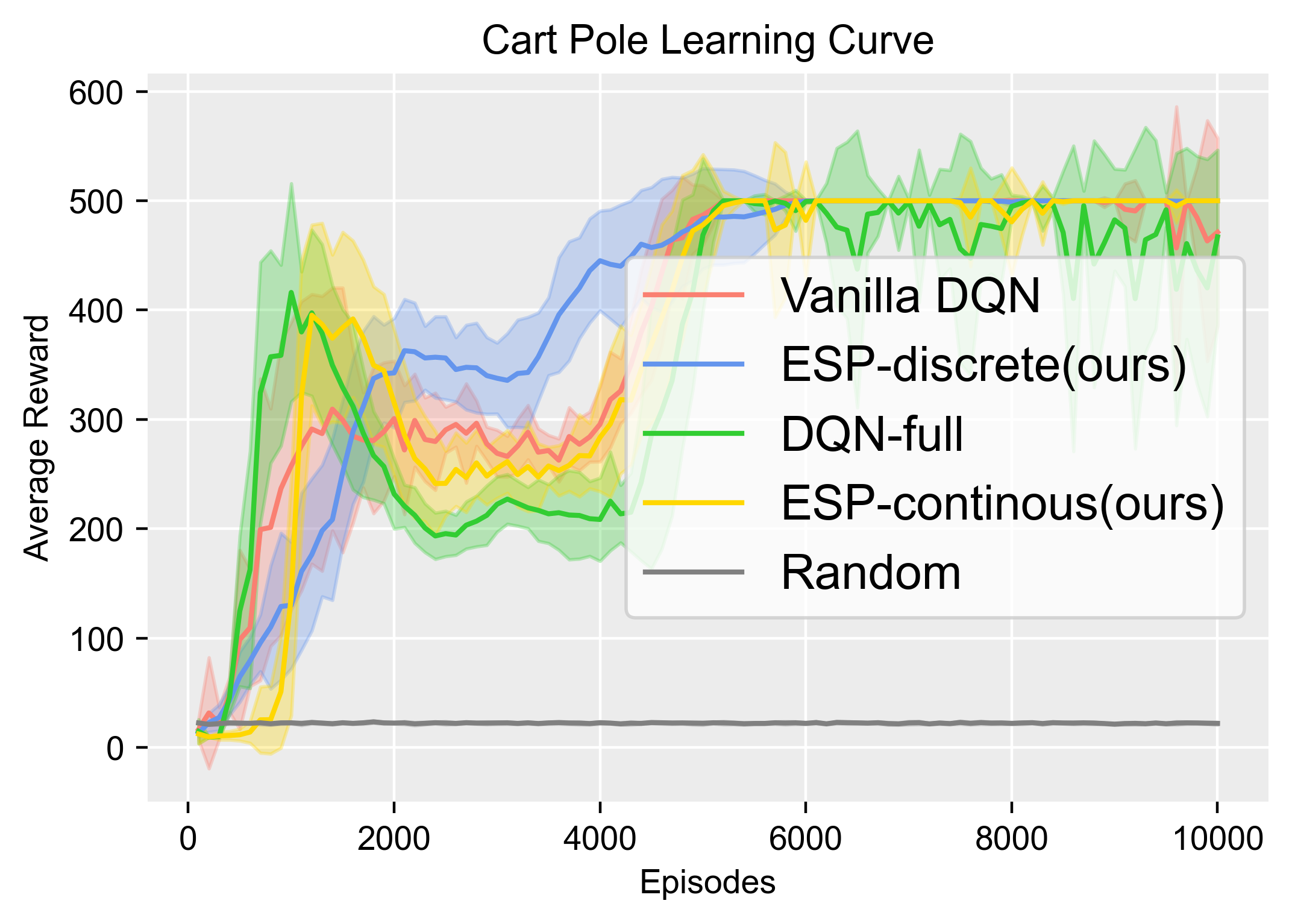}
%         \caption{Cart Pole}
         \label{fig:CP_LC}
     \end{subfigure}
     \hfill
     \begin{subfigure}[b]{0.3\textwidth}
         \centering
         \includegraphics[width=\textwidth]{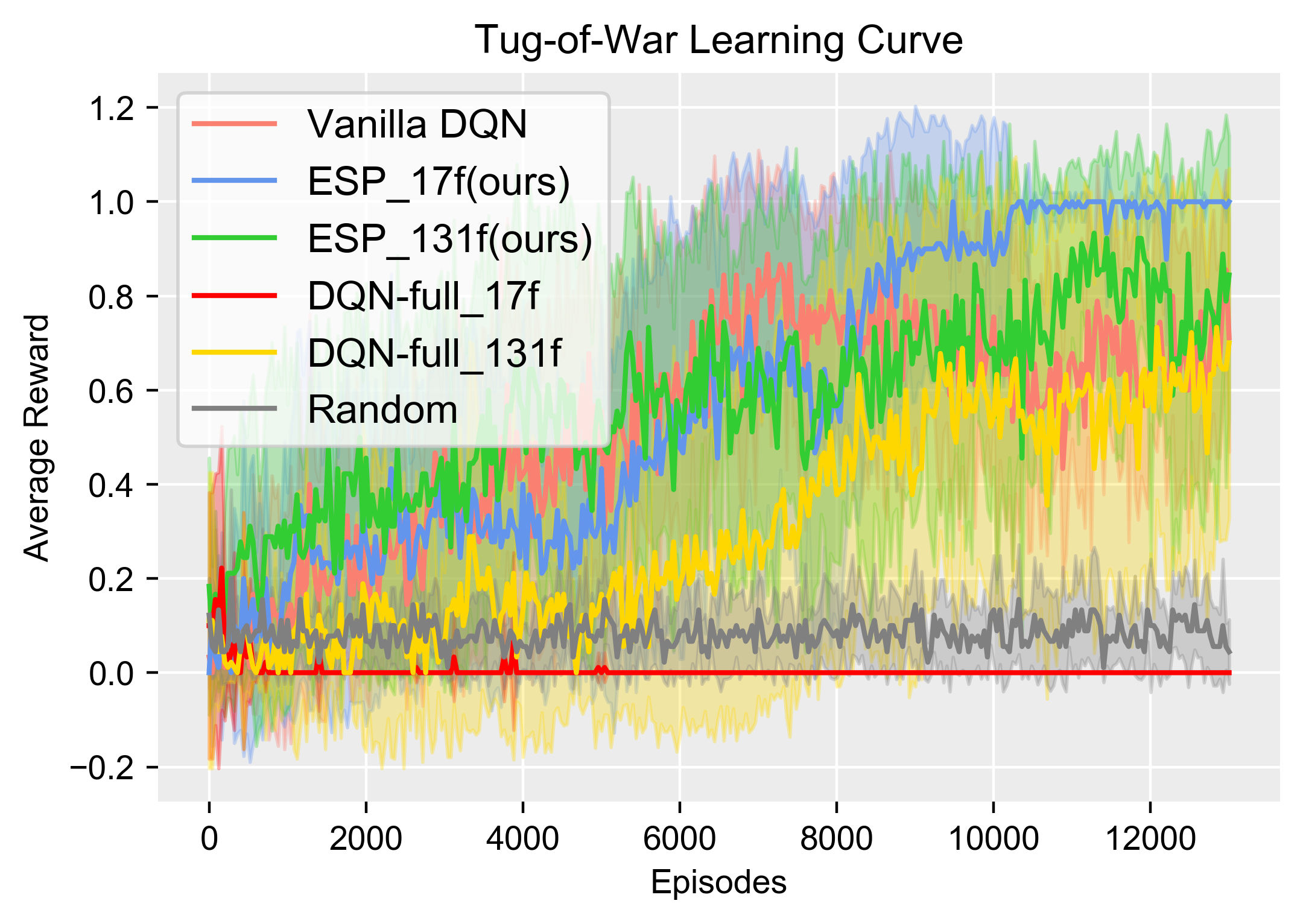}
%         \caption{Tug-of-war}
         \label{fig:TOW_LC}
     \end{subfigure} \\[-1em]
     \begin{subfigure}[b]{0.3\textwidth}
        
         \centering
         \includegraphics[width=\textwidth]{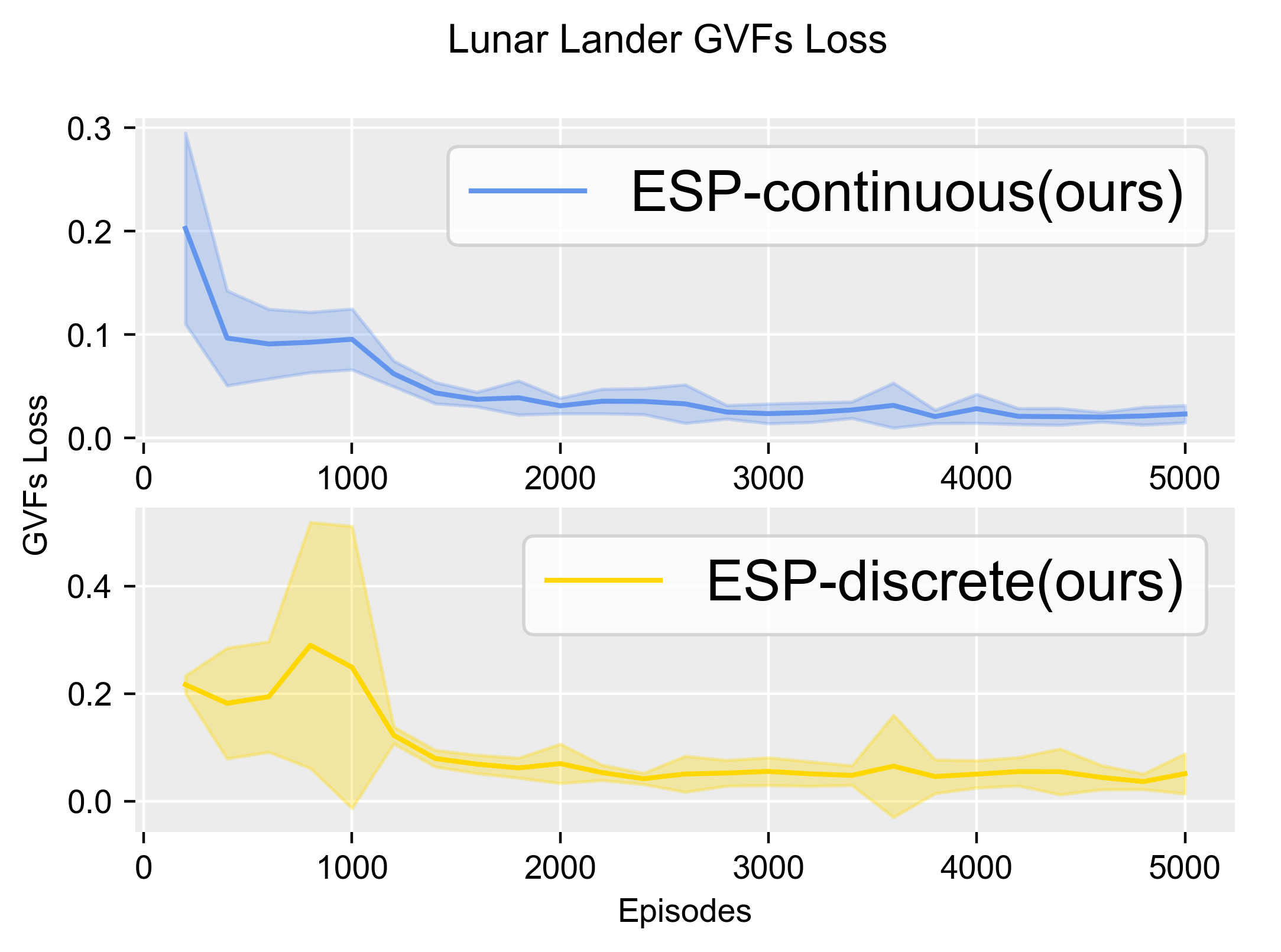}
         \caption{ Lunar Lander}
         \label{fig:LL_GVFs_Loss}
     \end{subfigure}
     \hfill
     \begin{subfigure}[b]{0.3\textwidth}
         \centering
         \includegraphics[width=\textwidth]{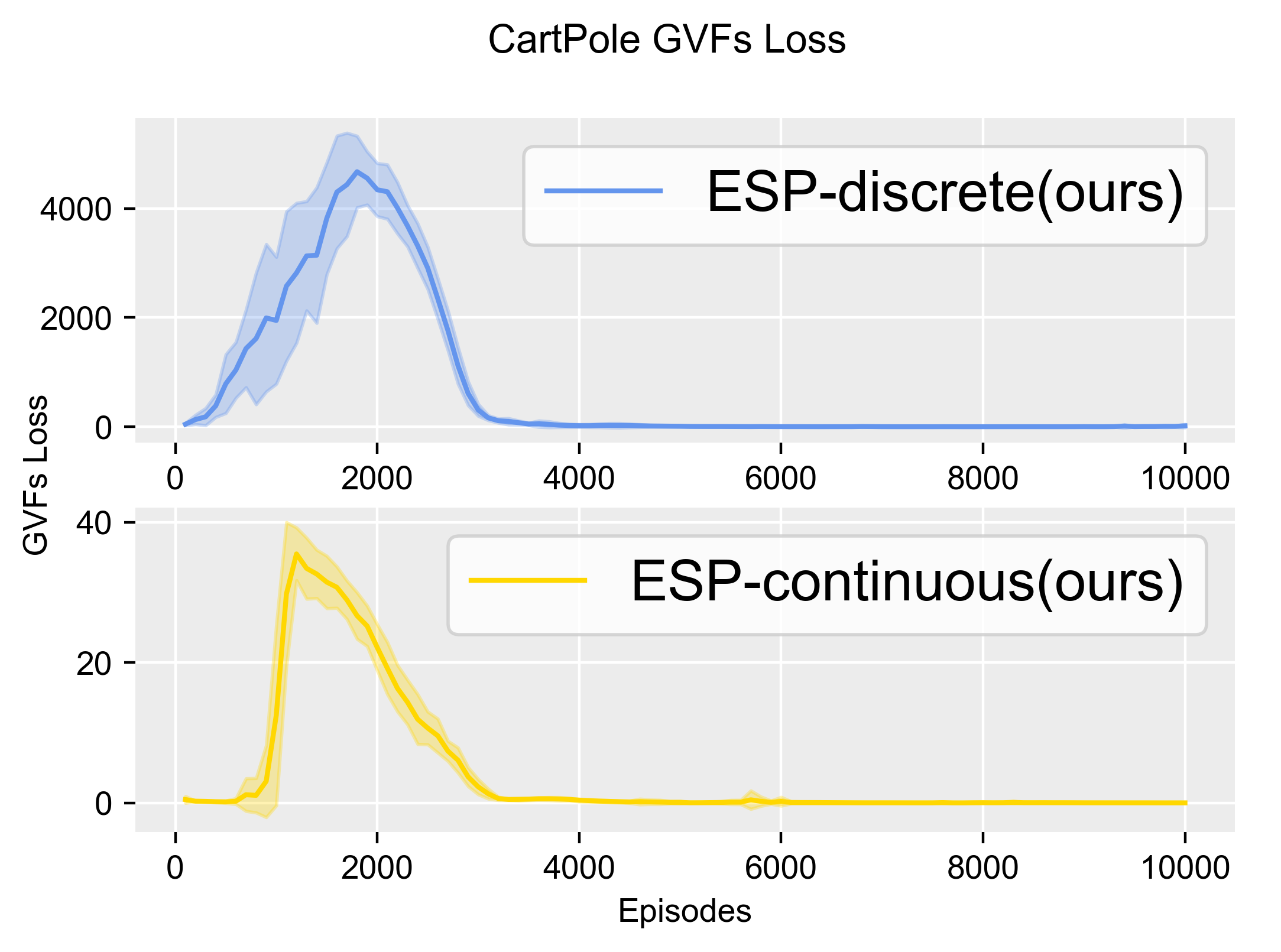}
         \caption{Cart Pole}
         \label{fig:CP_GVFs_Loss}
     \end{subfigure}
     \hfill
     \begin{subfigure}[b]{0.3\textwidth}
         \centering
         \includegraphics[width=\textwidth]{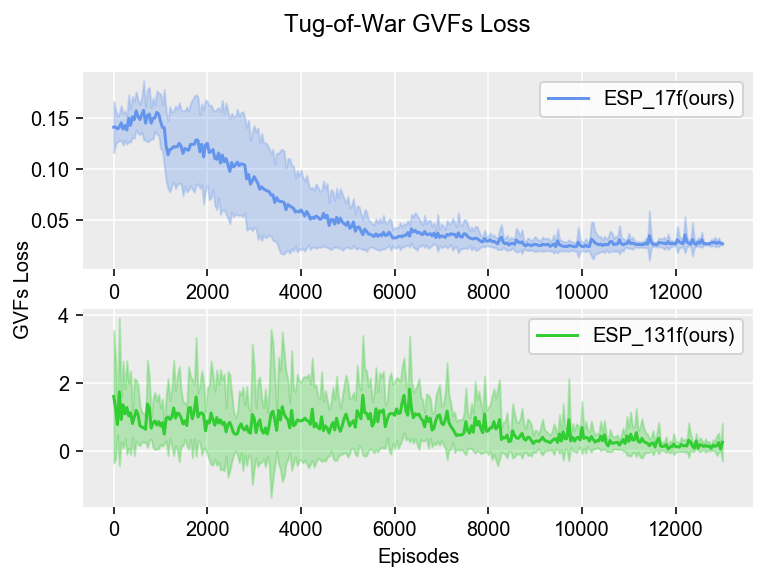}
         \caption{Tug-of-war}
         \label{fig:TOW_GVFs_Loss}
     \end{subfigure}

\caption{Reward learning curves (top row) and GVF Loss learning curves (bottom row) for the different agents in three environments. We show the mean +/- std over 10 independent runs.}
\label{fig:learning-curves}
\vspace{-1em}
\end{figure}

\vspace{-0.5em}
\subsection{Example Explanations}
\label{sec:case-study}
\begin{figure}[t]
     \begin{subfigure}[t]{0.52\textwidth}
         \centering
         \includegraphics[width=\textwidth]{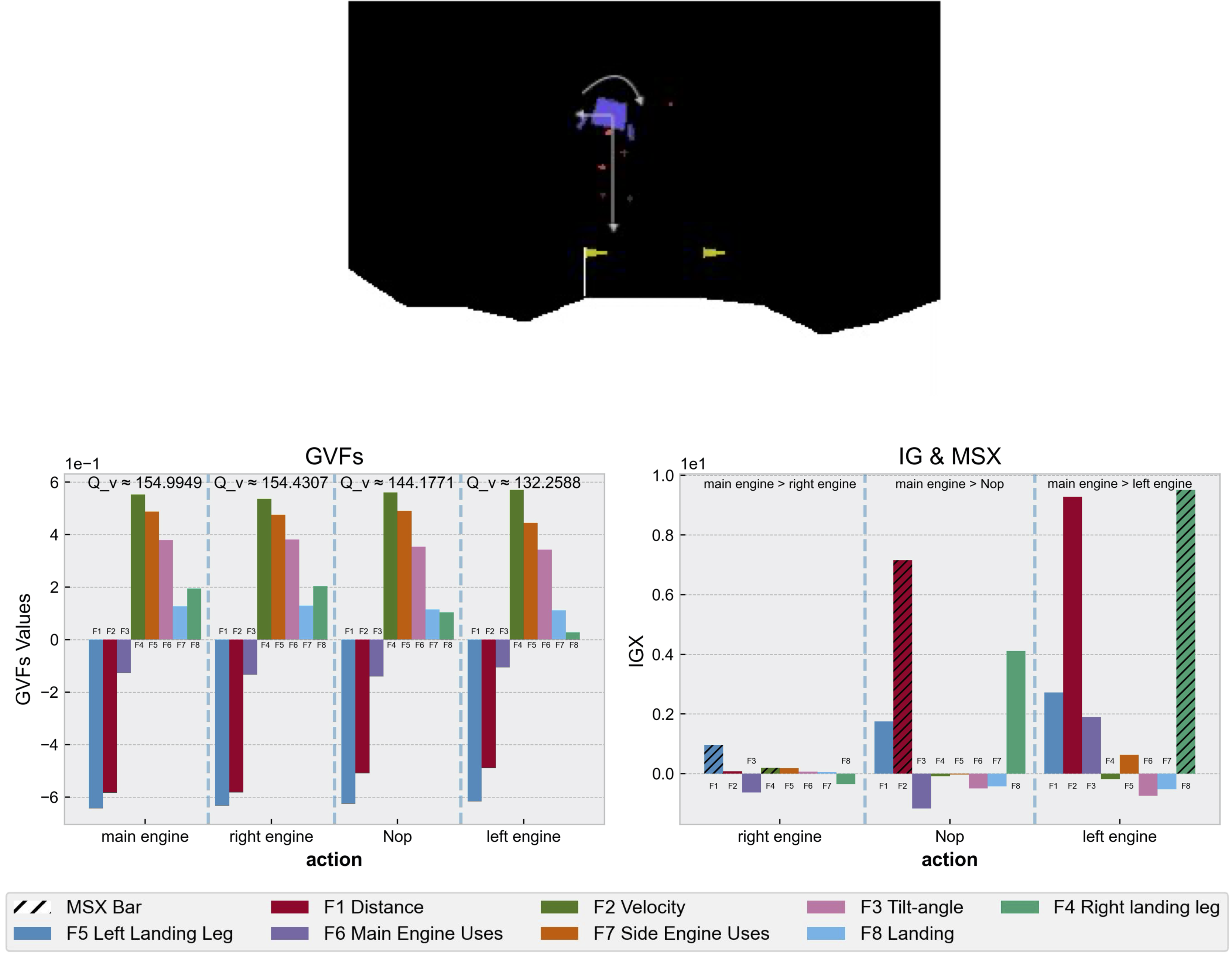}
         \caption{Lunar Lander}
         \label{fig:LL_game_1}
     \end{subfigure}
     \hfill
     \centering
     \begin{subfigure}[t]{0.45\textwidth}
         \centering
         \includegraphics[width=\textwidth]{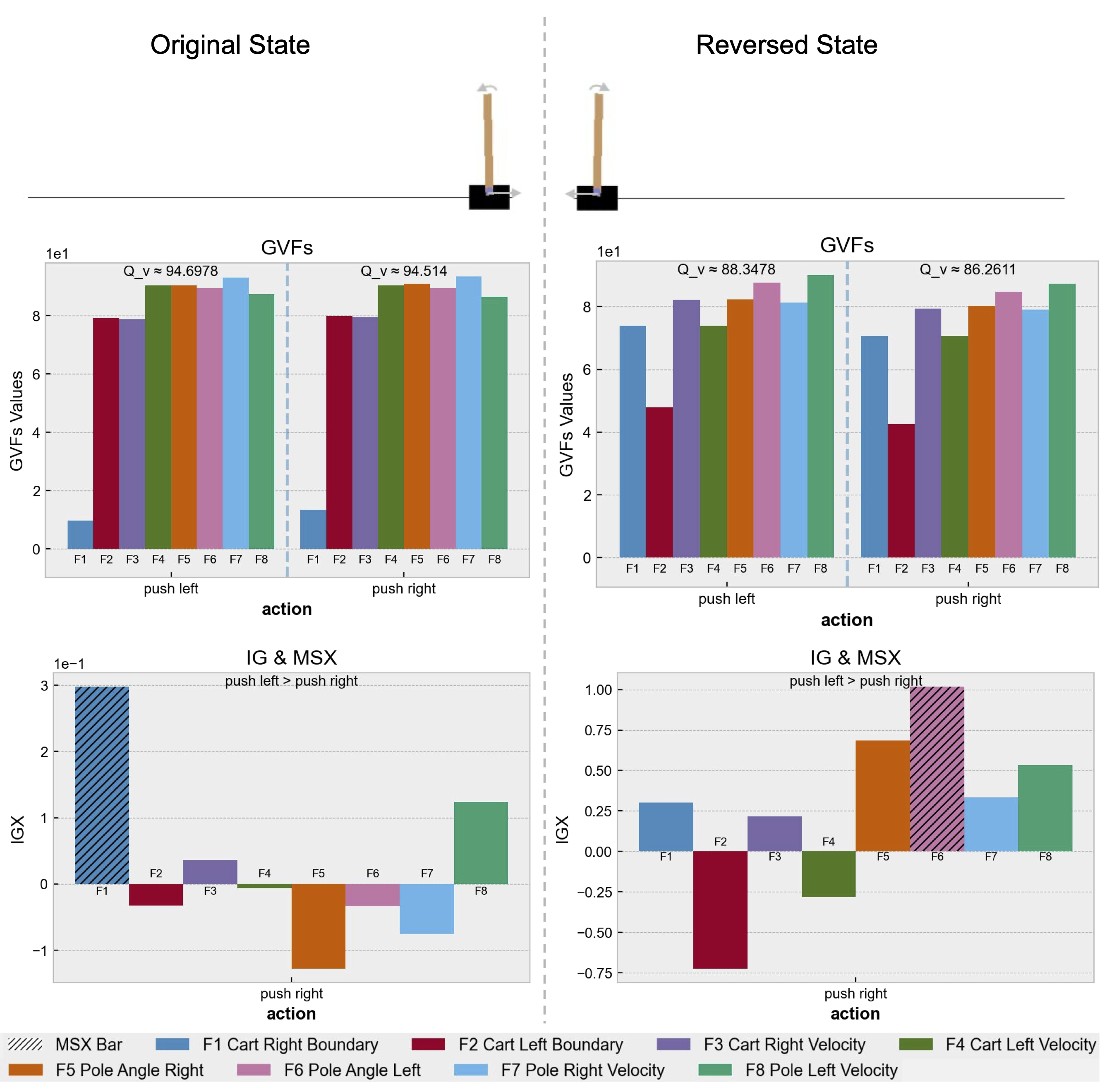}
         \caption{Cart Pole}
         \label{fig:CP_game_combined}
     \end{subfigure}
\caption{Explanation examples for Lunar Lander (left) and CartPole (right). Each example shows the game state, the Q-values and GVF predictions for actions, and the IGX and MSX.}
\label{fig: CP_LL_game_1}
\vspace{-1.5em}
\end{figure}
Appendix \ref{sec:appendix-case-study} includes a larger set of examples with detailed analysis in each domain. 

{\bf Lunar Lander.} In Figure \ref{fig:LL_game_1}, the game state (top) shows a state in Lunar Lander entered by a near-optimal learned ESP policy. The state is dangerous due to the fast downward and clockwise rotational velocity depicted by arrows. The GVFs (bottom-left) shows the Q-values for the actions and the predicted GVF bars. We see that the ``main engine" and ``right engine" actions have nearly the same Q-values with "main engine" slightly preferred, while ``left engine" and ``noop" are considered significantly worse. We want to understand the rationale for the strong and weak preferences.

While a user can observe differences among GVFs across actions, it is not clear how they relate to the reference. The IG and MSX (bottom-right) shows the IGXs corresponding to the preference of ``main engine" over the other three actions. In addition, the MSX is depicted via dashed lines over IGX components in the MSX. Focusing first on the larger preferences, ``main engine" is preferred to ``left engine" primarily due to GVF differences in the velocity and landing features, with the MSX showing that landing alone is sufficient for the preference. This rationale agrees with common sense, since the left engine will accelerate the already dangerous clockwise rotation requiring more extreme actions that put the future reward related to landing at risk. 

For the preference over ``noop" the velocity feature dominates the IGX and is the only MSX feature. This agrees with intuition since by doing nothing the dangerous downward velocity will not be addressed, which means the landing velocity will have a more negative impact on reward. Comparing ``main engine" to the nearly equally valued ``right engine" shows that the slight preference is based on the distance and right leg landing feature. This is more arbitrary, but agrees with intuition since the right engine will both reduce the downward velocity and straighten the ship, but will increase the leftward velocity compared to the main engine. This puts it at greater risk of reducing reward for missing the right leg landing goal and distance reward. Overall the explanations agreed well with intuition, which together with similar confirmation can increase our confidence in the general reasoning of the policy. We also see the MSXs were uniformly very small.

{\bf Cart Pole.} We compare a Cart Pole state-action explanation to an explanation produced by its reversed state as shown in Figure \ref{fig:CP_game_combined}. This comparison illustrates how in one case, the explanation agrees with intuition and builds confidence; while the other exposes an underlying inaccuracy or flaw. 

%Figure \ref{fig: CP_game_2_and_3} (top row) illustrates an extreme explanation example for our Cart Pole ESP-DQN agent. Figure \ref{fig: CP_game_2_and_3} (bottom row) depicts another extreme explanation example with a reversed game state. %generated by multiplying -1 with each input elements of the first one. The purpose of this pair example is to see if the agent can explain them with closing reasons or the explanation is different and expose the agent's flaw of one aspect for helping agent designer to improve their agents. 
Our original game state (left) positions the cart in a dangerous position moving right, close to the end of the track. The pole is almost vertical and has a small angle velocity towards the left. The action ``push left" (move cart left) agrees with intuition as the cart is at the right edge of the screen and cannot move right without failing the scenario. The IG and MSX (left) concurs, showing the cart's current position close to the right edge as the main reason why it prefers the ``push left" action over the ``push right"; moving left will put the cart back within a safe boundary. %This explanation give human more confident to trust the agent at this game state.

%Figure \ref{fig:CP_game_2 state} shows the game state where the cart is moving to right with the position being in the dangerous range where close to the right border of the game screen. The poles are almost vertical with a very small angle velocity to the left. The action "push left" agrees with intuition since the most noteworthy feature of the game state is the cart right position. Figure \ref{fig:CP_game_2 IG_MSX} agrees with that by showing the cart right position is the main reason why the agent prefer "push left" over "push right" since "push left" will pull the cart back to the safe range. This explanation give human more confident to trust the agent at this game state.

Reversing the game state (left) by multiplying -1 to each value in the input state vector produces a flipped game state (right). The cart is now positioned in a dangerous position moving left, close to the end of the track. Once again the pole is almost vertical and now has a small angle velocity towards the right. One would expect the agent to perform the action ``push right" (the opposite action to game state (left)) as moving left will cause the agent to move off the screen and fail the scenario. However, as depicted in IG and MSX (right) we see the agent prefers ``push left" over ``push right". The agent justifies this action via an MSX that focuses on maintaining pole vertically to the left. This justification indicates that the agent is putting too much weight on the pole angle versus the boundary condition in this dangerous situation. The agent has not learned the critical importance of the left boundary. This indicates further training on the left side of the game map is needed. Presumably, during training the agent did not experience similar situations very often.
\begin{figure}[t]
     \centering
     \begin{subfigure}[c]{0.3\textwidth}
         \centering
         \includegraphics[width=\textwidth]{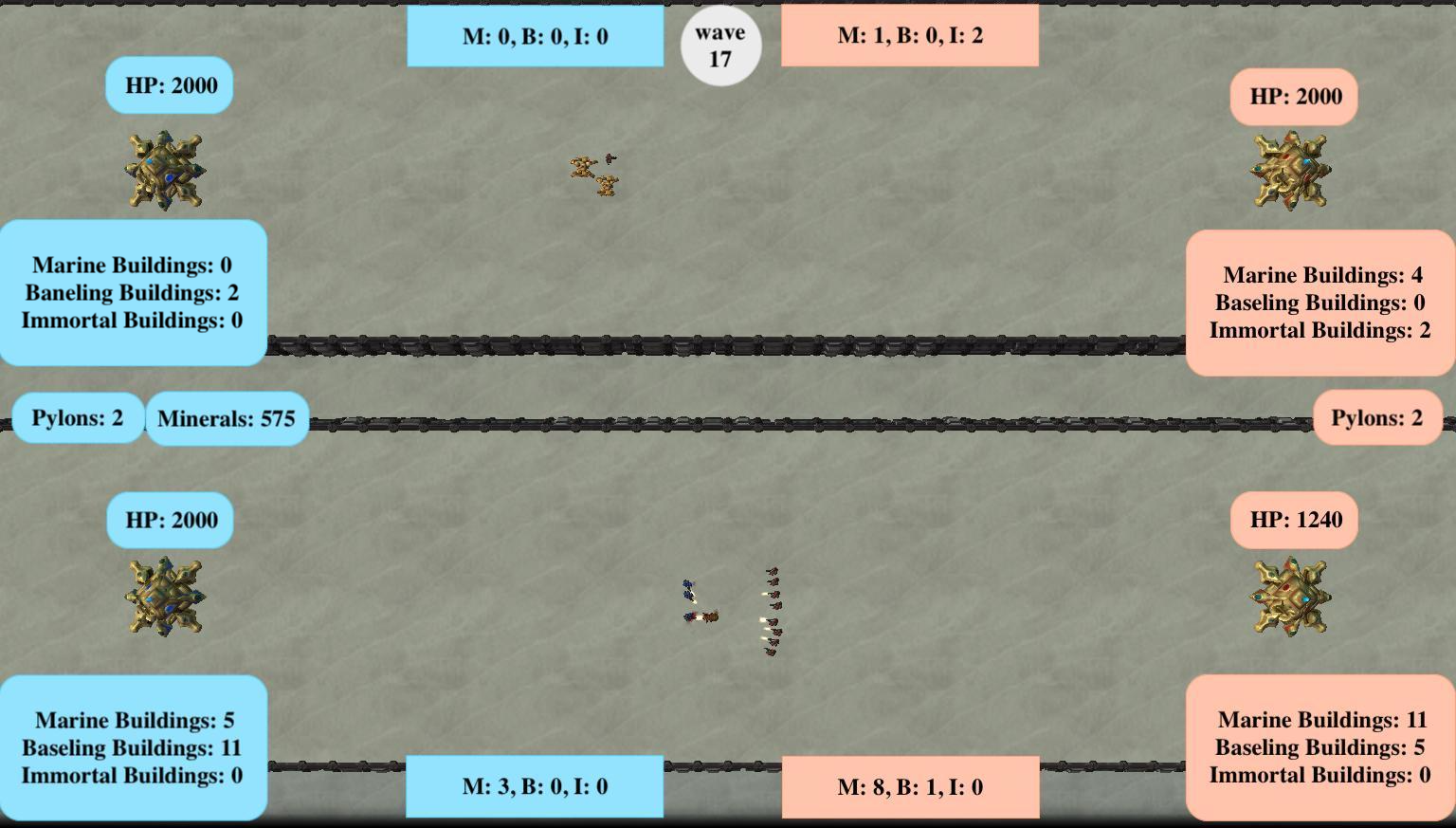}
        %  \caption{Game 1 State}
         \label{fig:ToW_game_1 state}
     \end{subfigure}
     \hfill
     \begin{subfigure}[c]{0.3\textwidth}
         \centering
         \includegraphics[width=\textwidth]{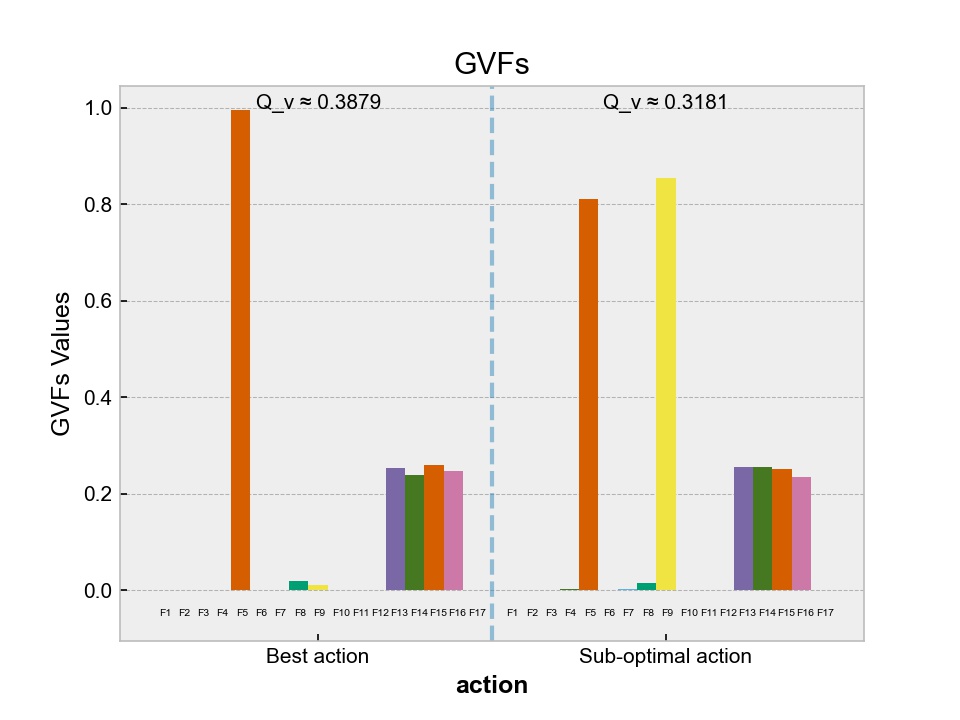}
        %  \caption{Game 1 GVFs}
         \label{fig:ToW_game_1_GVFs}
     \end{subfigure}
     \hfill
     \begin{subfigure}[c]{0.3\textwidth}
         \centering
         \includegraphics[width=\textwidth]{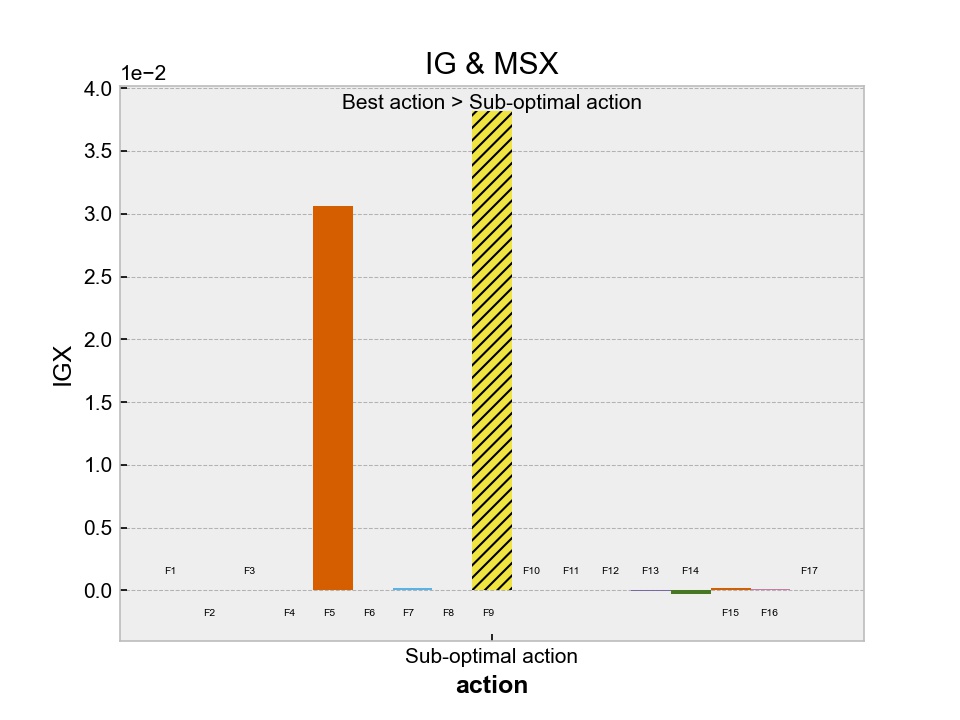}
        %  \caption{Game 1 IG and MSX}
         \label{fig:ToW_game_1_IG_MSX}
     \end{subfigure} \\[-1.5em]
     \centering
     
     \begin{subfigure}[c]{0.3\textwidth}
         \centering
         \includegraphics[width=\textwidth]{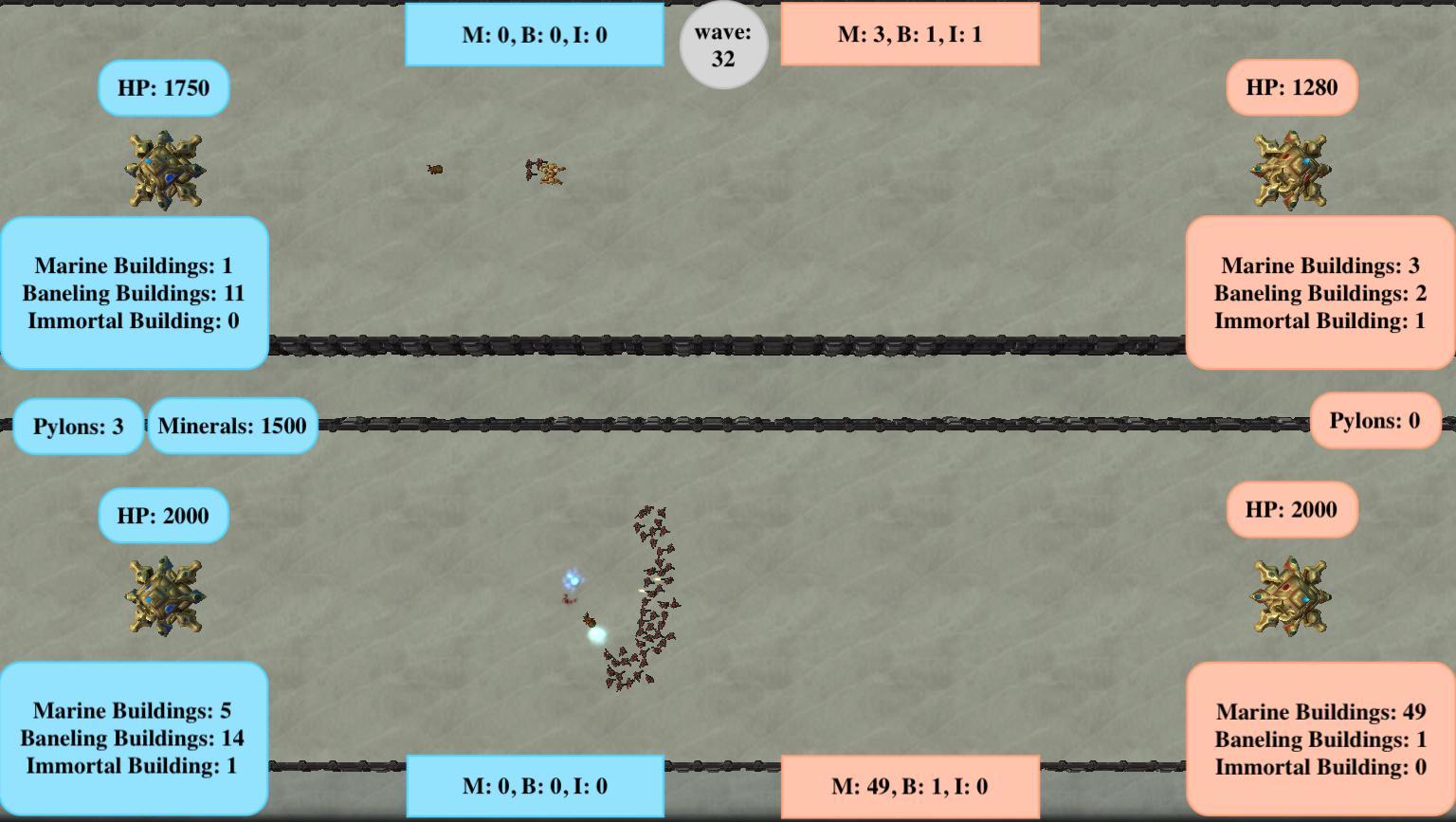}
        %  \caption{Game 2 State}
         \label{fig:ToW_game_2 state}
     \end{subfigure}
     \hfill
     \begin{subfigure}[c]{0.3\textwidth}
         \centering
         \includegraphics[width=\textwidth]{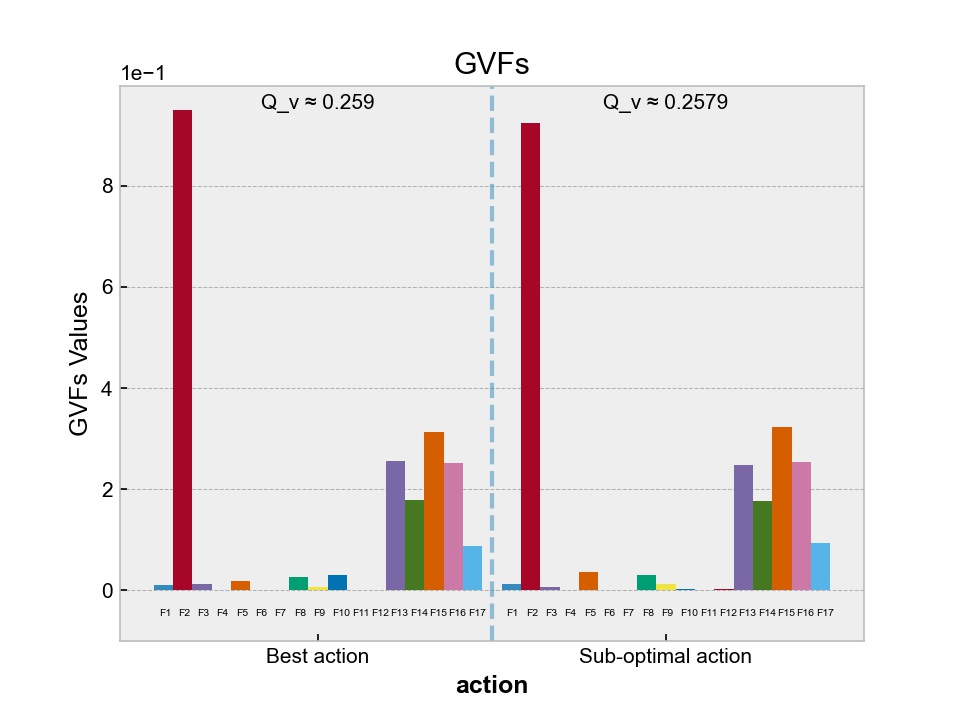}
        %  \caption{Game 2 GVFs}
         \label{fig:ToW_game_2_GVFs}
     \end{subfigure}
     \hfill
     \begin{subfigure}[c]{0.3\textwidth}
         \centering
         \includegraphics[width=\textwidth]{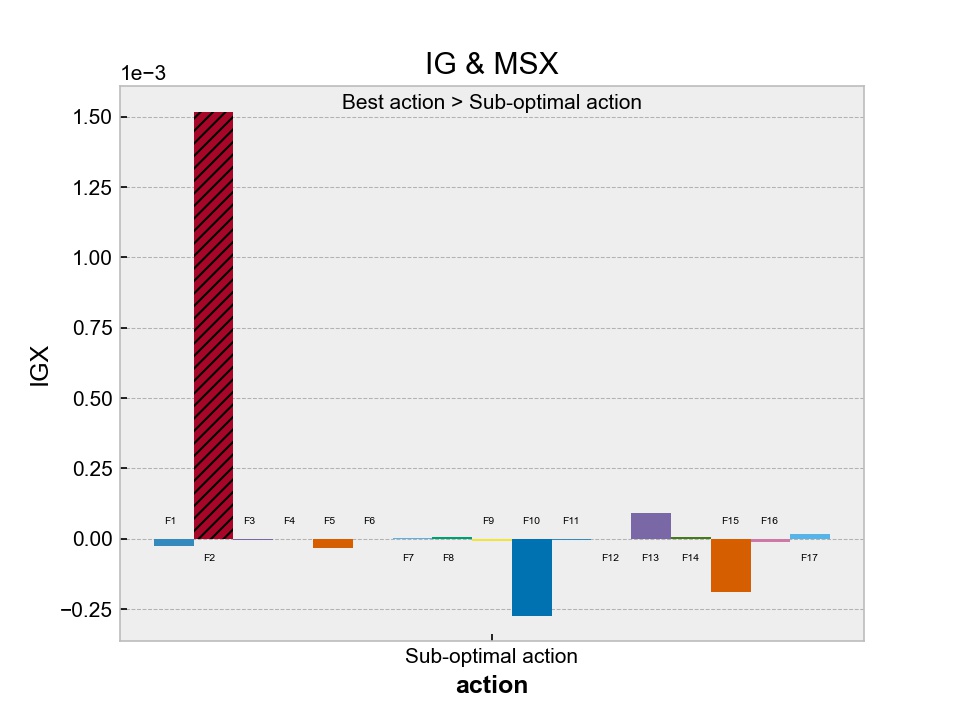}
        %  \caption{Game 2 IG and MSX}
         \label{fig:ToW_game_2_IG_MSX}
     \end{subfigure}\\[-1em]
     \begin{subfigure}[]{\textwidth}
         \centering
         \includegraphics[width=\textwidth]{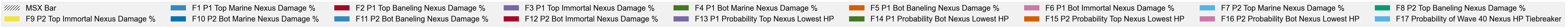}
     \end{subfigure}
\caption{Example Explanations for Tug-of-War 17 feature ESP-DQN agent. Each row is a decision point showing: (left) game state; (middle) Q-values and GVFs for preferred action and a non-preferred action; (right) IGX for action pair and corresponding MSX (indicated by highlighted bars). For Game 1 (top) the agent's preferred action is +4 Marine, +1 Baneling in Top Lane and the non-preferred action is +10 Marine, +1 Baneling on Bottom.  For Game 2 (bottom) the highest ranked action is +1 Baneling in Bottom Lane and sub-optimal action is +2 Marine, +4 Baneling in Bottom Lane.}
\label{fig: ToW_game}
\vspace{-1em}
\end{figure}

{\bf Tug of War.} In Figure \ref{fig: ToW_game}, we give 2 examples from a high-performing 17 feature ESP agent, one that agrees with common sense and one that reveals a flaw. Appendix \ref{sec:appendix-case-study} has additional examples for 17 and 131 feature agents. Game state (top) shows the ESP agent (blue player) with too few marine buildings to defend against the Immortals. We show information for the best ranked action and a sub-optimal action (details in caption). The best action creates top-lane units, while the sub-optimal action creates the maximum possible bottom-lane units. The IGX and MSX show (top) that the most responsible GVF feature for the preference is ``damage to the top base from immortals", which agrees with intuition since the best action attempts to defend the top base, while the sub-optimal action does not. Indeed, the GVFs (top) for the sub-optimal action reveals that the top base is predicted to take 80\% damage from the enemy's top Immortals compared to nearly 0 for the best action. 

In the second game state (bottom), the ESP agent plays against an opponent that it was not trained against and loses by having the bottom base destroyed. The state shows a large enemy attack in the bottom with the ESP agent having enough resources (1500 minerals) to defend if it takes the right action. However, the most preferred action is to add just one Baneling building to the bottom lane, which results in losing. Why was this mistake made? 

We compare the preferred action to an action that adds more buildings to the bottom lane, which should be preferred. The IGX and MSX show that the action preference is dominated by the GVF feature related to inflicting damage in the top lane with Banelings. Thus, the agent is ``planning" to save minerals to purchase more top Baneling buildings. The IGX does indicate that the agent understands that the sub-optimal action will be able to defend the bottom lane, however, this advantage for the sub-optimal action is overtaken by the optimism about the top lane. This misjudgement of the relative values causes the agent to lose the game. On further analysis, we found that this misjudgement is likely due to the fact that the ESP agent never experienced a loss due to such a bottom lane attack from the opponent it was trained against. This suggests training against more diverse opponents.

\vspace{-0.5em}
\section{Summary}
\vspace{-0.5em}

We introduced the ESP model for producing meaningful and sound contrastive explanations for RL agents. The key idea is to structure the agent's action-value function in terms of meaningful future predictions of its behavior. This allows for action-value differences to be compared in terms of deltas in the future behaviors they entail. To achieve meaningfulness, we required the agent designer to provide semantic features of the environment, upon which GVFs were learned. To achieve soundness, we ensured that our explanations were formally related to the agent's preferences in a well-defined way. Our case studies provide evidence that ESP models can be learned in non-trivial environments and that the explanations give insights into the agent's preferences.  An interesting direction for future work is to continue to enhance the internal structure of the GVFs to allow for explanations at different levels of granularity, which may draw on ideas from GVF networks \citep{schlegel2018}. We are also interested in designing and conducting user studies that investigate the utility of the approach for identifying agent flaws and improving user's mental models of agents.

\newpage

% \section{Broader Impact}

% Our model falls in the broad category of explainable AI techniques. The broader impact of having explainable AI systems has been discussed in many contexts. Clearly allowing humans to gain insights into an agent's decision-making process can help build confidence in an agent and identify flaws. We envision these techniques to be utilized by both system designers and end-users for these purposes. The usual domains of critical importance are clear motivations for the need including: autonomous vehicles, robots, medical applications, security applications, financial and social decision making, among many others. 

% A major risk of explainable AI techniques is that they lead humans to have more confidence in a system than they should, or to form an incorrect mental model of the system. This can easily occur when explanations are not tied to some notion of soundness. Unfortunately, many, if not most, of the currently proposed techniques/frameworks for explanations are not sound in any way and this issue is almost brushed under the rug. Consumers of the explanations are left to form their own interpretation of what the explanation actually means and how it relates to the internal logic of the agent. Through this paper, we have tried to develop an example of how one might define and produce sound explanations.

%\bibliographystyle{natlib}
\bibliographystyle{iclr2021_conference}
\bibliography{references} 

\appendix

\section{ESP-DQN Pseudo-Code}
\label{sec:ESP_algorithm}
The Pseudo-code for ESP-DQN is given in Algorithm \ref{ESP_algorithm}. 

\begin{algorithm}[h]
\caption{{\bf ESP-DQN:} Pseudo-code for ESP-DQN agent Learning.}
\begin{algorithmic}
\REQUIRE $\mbox{Act}(s,a)$  ;;  returns tuple $(s', r, F, done)$ of next state $s'$, reward $r$, GVF features $F \in R^n$, and terminal state indicator $done$
\REQUIRE $K$ - target update interval, $\beta$ - reward discount factor, $\gamma$ - GVF discount factor
\STATE Init $\hat{Q}_F$, $\hat{Q}_F'$ ;; The non-target and target GVF networks with parameters $\theta_F$ and $\theta_F'$ respectively.
\STATE Init $\hat{C}$, $\hat{C}'$ ;; The non-target and target combining networks with $\theta_C$ and $\theta_C'$ rspectively.
\STATE Init $M \gets \emptyset$ ;; initialize replay buffer
\STATE ;; Q-function is defined by $\hat{Q}(s,a)=\hat{C}(\hat{Q}_F(s,a))$
\STATE ;; Target Q-function is defined by $\hat{Q}'(s,a)=\hat{C}'(\hat{Q}'_F(s,a))$
\REPEAT
\STATE Environment Reset
$s_0 \leftarrow \mbox{Initial State}$
$\mbox{totalUpdates} \gets 0$
\FOR{$t \leftarrow 0 \mbox{ to } T$} 
\STATE $a_t \leftarrow \epsilon(\hat{Q}, s_t)$ // $\epsilon$-greedy
\STATE $(s_{t+1}, r_t, F_t, done_t) \gets \text{Act}(s_t,a_t)$
%\STATE $(s_{t+1}, r_t, done_t) \leftarrow Execute(a_t)$
%\STATE $F_t \leftarrow Features(s_t, a_t)$
\STATE Add $(s_t, a_t, r_t, F_t, s_{t+1}, done_t)$ to $M$
\STATE
\STATE ;; update networks
\STATE Randomly sample a mini-batch $\{(s_i, a_i, r_i, F_i, s_i', done_i)\}$ from $M$
\STATE $\hat{a}_i \leftarrow \arg\max_{a \in A} \hat{Q}'(s_i', a)$
\STATE $f'_i \leftarrow \begin{cases} 
F_i & \text{If $done_i$ is \TRUE}\\
F_i + \gamma \hat{Q}_F'(s_i', \hat{a}_i) & \text{Otherwise}
\end{cases}$
\STATE $q'_i \leftarrow \begin{cases}
r_i & \text{If $done_i$ is \TRUE}\\
r_i + \beta \hat{Q}'(s_i', \hat{a}_i) & \text{Otherwise}
\end{cases}$
\STATE Update $\theta_F$ via gradient descent on average mini-batch loss $(f'_i-\hat{Q}_F(s_i, a_i))^2$ 
\STATE Update $\theta_C$ via gradient descent on average mini-batch loss $(q'_i-\hat{Q}(s_i, a_i))^2$
\STATE
\IF{$\mbox{totalUpdates} \mbox{ mod } K == 0$}
            \STATE $\theta'_F \gets \theta_F$
            \STATE $\theta'_C \gets \theta_C$
\ENDIF
\STATE $\mbox{totalUpdates} \gets \mbox{totalUpdates} + 1$
\STATE
\IF{$done_t$ is \TRUE}
\STATE break
\ENDIF
\ENDFOR
\UNTIL{convergence}
\end{algorithmic}
\label{ESP_algorithm}
\end{algorithm}

\section{Convergence Proof for ESP-Table}
\label{sec:converge-proof}
Algorithm \ref{alg:ESP-Table} gives the pseudo-code for ESP-Table based on $\epsilon$-greedy exploration. Note that, as for Q-learning, the convergence proof applies to any exploration strategy that guarantees all state-action pairs are visited infinitely often in the limit. 

\begin{algorithm}
\caption{{\bf ESP-Table:} Pseudo-code for a table-based variant of ESP-DQN. The notation $Q \xleftarrow{\alpha} x$ is shorthand for $Q \leftarrow (1-\alpha)Q+\alpha x$.}
\label{alg:ESP-Table}
\begin{algorithmic}
\REQUIRE $\mbox{Act}(s,a)$  ;;  returns tuple $(s', r, F)$ of next state $s'$, reward $r$, and GVF features $F \in R^n$
\REQUIRE $h(q)$ - hash function from $R^n$ to a finite set of indices $I$
\REQUIRE $K$ - target update interval
\REQUIRE $\gamma$, $\beta$ - discount factors for GVF and reward respectively
    \STATE Init $\alpha_{F,0}$, $\alpha_{F,0}$  ;; learning rates for GVF and combining function
    \STATE Init $\hat{Q}_F[s,a]$  ;;  GVF table indexed by state-action pairs
    \STATE Init $\hat{C}[i]$  ;;  Combining function table indexed by indices in $I$
    \STATE Init $\hat{Q}'_F[s,a]$  ;;  Target GVF table indexed by state-action pairs
    \STATE Init $\hat{C}'[i]$  ;; Target Combining function table indexed by indices in $I$
    \STATE ;; Q-function is defined by $\hat{Q}(s,a)=\hat{C}[h(\hat{Q}_F[s,a])]$
    \STATE ;; Target Q-function is defined by $\hat{Q}'(s,a)=\hat{C}'[h(\hat{Q}'_F[s,a])]$
    \vspace{1em}
    \STATE $s_0 \gets \mbox{Initial State}$
    \STATE $t=0$
    \REPEAT
        \IF{$t\mbox{ mod } K == 0$}
            \STATE $\hat{Q}'_F \gets \hat{Q}_F$
            \STATE $\hat{C}' \gets \hat{C}$
        \ENDIF
        \STATE $a_t \gets \epsilon(\hat{Q}, s_t)$  ;; $\epsilon$-greedy exploration
        \STATE $(s_{t+1}, r_t, F_t) \gets \text{Act}(s_t,a_t)$
        \STATE $a' \gets \arg\max_a \hat{Q}'(s_{t+1},a)$
        \STATE $\hat{Q}_F[s_t,a_t] \xleftarrow{\alpha_{F,t}} F_t + \gamma \hat{Q}'_F(s_{t+1}, a')$
        \STATE $\hat{C}[h(\hat{Q}_F[s_t,a_t])] \xleftarrow{\alpha_{C,t}} r_t + \beta \hat{Q}'(s_{t+1}, a')$
        \STATE $t\gets t+1$
    \UNTIL{convergence}
\end{algorithmic}
\end{algorithm}

For the proof we will let $t$ index the number of learning updates and $i = \left \lfloor t/K \right \rfloor$ be the number of updates to the target tables. The formal statements refer to the \emph{``conditions for the almost surely convergence of standard Q-learing"}. These condition are: 1) There must be an unbounded number of updates for each state-action pair, and 2) The learning rate schedule $\alpha_t$ must satisfy $\sum_t \alpha_t = \infty$ and $\sum_t \alpha_t^2 < \infty$. ESP-Table uses two learning rates, one for the GVF and one for the combining function.

We will view the algorithm as proceeding through a sequence of \emph{target intervals}, indexed by $i$, with each interval having $K$ updates. We will let $\hat{C}'_i$ and $\hat{Q}'_{F,i}$ denote the target GVF and combining functions, respectively, for target interval $i$ with corresponding target Q-function $\hat{Q}'_i(s,a)=\hat{C}'_i[h(\hat{Q}'_{F,i}[s,a])]$ and greedy policy $\hat{\pi}'_i(s)=\arg\max_a \hat{Q}'(s,a)$. The following lemma relates the targets via the Bellman backup operators. Below for a GVF $Q_F$ we define the max-norm as $|Q_F|_{\infty} = \max_s\max_a\max_k |Q_{f_k}(s,a)|$. 
\begin{lemma}
\label{lemma:1}
If ESP-Table is run under the standard conditions for the almost surely (a.s.) convergence of Q-learning and uses a Bellman-sufficient pair $(F,h)$ with locally consistent $h$, then for any $\epsilon > 0$ there exists a finite target update interval $K$, such that, with probability 1, for all $i$, $\left|\hat{Q}'_{i+1} - B[\hat{Q}'_i]\right|_\infty \leq \epsilon$ and $\left|\hat{Q}'_{F,i+1} - B^{\hat{\pi}'_i}_F[\hat{Q}'_{F,i}]\right|_\infty \leq \epsilon$.
\end{lemma}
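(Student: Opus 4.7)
The plan is to fix an arbitrary target interval $i$ and track how the non-target tables $\hat{Q}_F$ and $\hat{C}$ evolve over its $K$ updates while the targets $\hat{Q}'_{F,i}$ and $\hat{C}'_i$, and hence the greedy policy $\hat{\pi}'_i$, are held constant. The key observation is that with the targets frozen, the GVF update $\hat{Q}_F[s,a] \xleftarrow{\alpha_{F,t}} F_t + \gamma\, \hat{Q}'_{F,i}(s_{t+1}, \hat{\pi}'_i(s_{t+1}))$ becomes a plain vector-valued Robbins--Monro recursion whose conditional-expectation target is $B^{\hat{\pi}'_i}_F[\hat{Q}'_{F,i}](s,a) = \hat{Q}^+_{F,i}(s,a)$. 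Inheriting the standard Q-learning step-size and infinite-visitation conditions, componentwise stochastic approximation yields that for any $\delta>0$, with $K$ sufficiently large, $|\hat{Q}_F - \hat{Q}^+_{F,i}|_\infty \leq \delta$ at the end of the interval with probability at least $1-\eta_i$ for some $\eta_i$ that can be made summable; choosing $\delta \leq \epsilon$ delivers the second bound of the lemma, and a Borel--Cantelli argument across $i$ promotes this to an almost-sure statement.

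Next I would leverage local consistency of $h$. Finiteness of the state-action space supplies a single $\delta_0>0$ such that for every $(s,a)$, any $q$ with $|q - \hat{Q}^+_{F,i}(s,a)|\leq \delta_0$ satisfies $h(q)=h(\hat{Q}^+_{F,i}(s,a))$. Picking $\delta \leq \min(\epsilon,\delta_0)$ in the previous step ensures that after some stopping time $\tau_i<K$ the hash indices queried by the combining-table updates coincide with $h(\hat{Q}^+_{F,i}(s,a))$ for the sampled pair. Bellman sufficiency then kicks in: any two state-action pairs $(s,a)$ and $(x,y)$ that map to a common cell satisfy $B[\hat{Q}'_i](s,a)=B[\hat{Q}'_i](x,y)$, so the $\hat{C}$ updates at that cell share a single conditional-expectation target. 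Restricted to updates after $\tau_i$, the combining-table updates form another well-posed Robbins--Monro recursion converging to that common target, and composing with hash stability yields $\hat{Q}(s,a) = \hat{C}[h(\hat{Q}_F(s,a))] \to B[\hat{Q}'_i](s,a)$ uniformly in $(s,a)$---exactly the first bound of the lemma once $K$ is taken large enough.

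The main obstacle I anticipate is reconciling the diminishing step-size schedule required by the Q-learning conditions with the lemma's demand of a \emph{single} finite $K$ working for every $i$. With $1/n$-type rates the cumulative learning mass inside a fixed window shrinks as $i$ grows, so the argument either has to exploit the $\gamma$- and $\beta$-contraction of the underlying Bellman operators to absorb per-interval residuals, or to choose $K$ implicitly in terms of the tails of $\sum_t \alpha_{F,t}$ and $\sum_t \alpha_{C,t}$ needed to move each visited cell by a fixed amount. A secondary technical nuisance is bounding the ``misaddressed'' combining-table updates issued during $[0,\tau_i]$ before the hash stabilizes; these contribute a bias of order $\sum_{t<\tau_i}\alpha_{C,t}$ that must be shown to be dominated by the correct updates in $[\tau_i,K]$, which is precisely where the uniform choice of $K$ needs to be made with care.
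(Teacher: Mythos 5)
Your proposal follows essentially the same route as the paper's proof: freeze the targets over an interval, treat the GVF updates as a stochastic-approximation estimate of a single Bellman GVF backup $B^{\hat{\pi}'_i}_F[\hat{Q}'_{F,i}]$, use local consistency of $h$ to stabilize the hash indices, and then invoke Bellman sufficiency so that all combining-table updates landing in a given cell share the common expected target $B[\hat{Q}'_i](s,a)$ and converge by a second stochastic-approximation argument. The two obstacles you flag --- that a diminishing global step-size schedule makes the learning mass inside a fixed window of length $K$ shrink as $i$ grows, and that the ``misaddressed'' $\hat{C}$ updates issued before the hash stabilizes must be controlled --- are genuine, but the paper's proof does not resolve them either; it simply extracts finite $t_1$ and $t_2$ from almost-sure convergence within a single interval without addressing uniformity of $K$ over all $i$.
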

That is, after a finite number of learning steps during an interval, the updated target Q-function and GVF are guaranteed to be close to the Bellman backups of the previous target Q-function and GVF. Note that since the targets are arbitrary on the first iteration, these conditions hold for any table-based ESP Q-function. 
\begin{proof}
Consider an arbitrary iteration $i$ with target functions $\hat{Q}'_i$, $\hat{C}'_i$, $\hat{Q}'_{F,i}$, and let  $\hat{Q}^t_i$, $\hat{C}^t_i$, and $\hat{Q}^t_{F,i}$ be the corresponding non-target functions after $t$ updates during the interval. Note that for $t=0$ the non-targets equal to the targets. The primary technical issue is that $\hat{C}^t_i$ is based on a table that can change whenever $\hat{Q}^t_{F,i}$ changes. Thus, the proof strategy is to first show a convergence condition for $\hat{Q}^t_{F,i}$ that implies the table for $\hat{C}^t_i$ will no longer change, which will then lead to the convergence of $\hat{C}^t_{i}$.

Each update of $\hat{Q}^t_{F,i}$ is based on a fixed target policy $\hat{\pi}'_i$ and a fixed target GVF $\hat{Q}'_{F,i}$ so that the series of updates can be viewed as a stochastic approximation algorithm for estimating the result of a single Bellman GVF backup given by
\begin{equation}
B^{\hat{\pi}'_i}_F[\hat{Q}'_{F,i}](s,a) = F(s,a) + \gamma\sum_{s'} T(s,a,s')\cdot \hat{Q}'_{F,i}[s',\hat{\pi}'_i(s')],
\end{equation}
which is just the expectation of $F(s,a)+\gamma\hat{Q}'_{F,i}[S',\hat{\pi}'_i(S')]$ with $S'\sim T(s,a,\cdot)$. Given the conditions on the learning rate $\alpha_t$ it is well known that $\hat{Q}^t_{F,i}$ will thus converge almost surely (a.s.) to this expectation, i.e. to $B^{\hat{\pi}'_i}_F[\hat{Q}'_{F,i}]$.\footnote{An ``MDP-centric" way of seeing this is to view the update as doing policy evaluation in an MDP with discount factor 0 and stochastic reward function $R(s,a,s')=F(s,a)+\gamma\hat{Q}'_{F,i}(s',\hat{\pi}'_i(s'))$. The convergence of policy evaluation updates then implies our result.} The a.s. convergence of $\hat{Q}^t_{F,i}$ implies that for any $\epsilon'$ there is a finite $t_1$ such that for all $t>t_1$, $|\hat{Q}^t_{F,i} - B^{\hat{\pi}'_i}_F[\hat{Q}'_{F,i}]|\leq \epsilon'$. This satisfies the second consequence of the lemma if $\epsilon' \leq \epsilon$ and $K > t_1$.

Let $\epsilon' < \epsilon$ be such that it satisfies the local consistency condition of $h$, which implies that for all $t>t_1$ and all $(s,a)$, $h(\hat{Q}^t_{F,i}[s,a]) = h(B^{\hat{\pi}'_i}_F[\hat{Q}'_{F,i}[s,a])$. That is, after $t_1$ updates, $h$ will map the non-target GVF to the same table entry as the Bellman GVF Backup of the target GVF and policy. Combining this with the Bellman sufficiency of $(F,h)$ implies that for any state action pairs $(s,a)$ and $(x,y)$, if $h(\hat{Q}^t_{F,i}[s,a]) = h(\hat{Q}^t_{F,i}[x,y])$ then $B[\hat{Q}'_i](s,a) = B[\hat{Q}'_i](x,y)$. This means that after $t_1$ updates all of the updates to a table entry $h(\hat{Q}^t_{F,i}[s,a])$ have the same expected value $B[\hat{Q}'_i](s,a)$. Using a similar argument as above this implies that $\hat{Q}^t_i(s,a) = h(\hat{Q}^t_{F,i}[s,a])$ converges a.s. to $B[\hat{Q}'_i](s,a)$ for all $(s,a)$ pairs. Let $t_2$ be the implied finite number of updates after $t_1$ where the error is within $\epsilon$. The target update interval $K=t_1+t_2$ satisfies both conditions of the lemma, which completes the proof. 
\end{proof}
Using Lemma \ref{lemma:1} we can prove the main convergence result. 
\begin{theorem}
If ESP-Table is run under the standard conditions for the almost surely (a.s.) convergence of Q-learning and uses a Bellman-sufficient pair $(F,h)$ with locally consistent $h$, then for any $\epsilon > 0$ there exists a finite target update interval $K$, such that for all $s$ and $a$, $\hat{\pi}^t(s)$ converges a.s. to $\pi^*(s)$ and $\lim_{t\rightarrow \infty} | \hat{Q}_F^t(s,a) - Q_F^{\pi^*}(s,a)|\leq \epsilon$ with probability 1. 
\end{theorem}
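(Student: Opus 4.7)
The plan is to use Lemma \ref{lemma:1} as a black box that reduces ESP-Table to approximate value iteration on the target Q-table and approximate policy evaluation on the target GVF table. Since the Bellman backup $B$ is a $\beta$-contraction in max-norm and each Bellman GVF backup $B_F^{\pi}$ is a $\gamma$-contraction, standard approximate-dynamic-programming arguments will then convert the per-interval error tolerance into a fixed-point neighborhood whose radius shrinks as the target update interval $K$ grows.

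First I would fix an auxiliary tolerance $\epsilon' > 0$ to be chosen later, and apply Lemma \ref{lemma:1} to pick $K$ large enough that, with probability 1 and for every target interval $i$,
\[
\bigl|\hat{Q}'_{i+1} - B[\hat{Q}'_i]\bigr|_\infty \leq \epsilon' \quad\text{and}\quad \bigl|\hat{Q}'_{F,i+1} - B_F^{\hat{\pi}'_i}[\hat{Q}'_{F,i}]\bigr|_\infty \leq \epsilon'.
\]
Iterating the first inequality against the $\beta$-contraction $B$ yields $\limsup_{i\to\infty}|\hat{Q}'_i - Q^*|_\infty \leq \epsilon'/(1-\beta)$. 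Let $\delta = \min_s \min_{a \notin \arg\max_{a'} Q^*(s,a')}\bigl(\max_{a'}Q^*(s,a') - Q^*(s,a)\bigr) > 0$ be the $Q^*$ action gap. Choosing $\epsilon'$ so that $\epsilon'/(1-\beta) < \delta/2$ forces $\hat{\pi}'_i(s) \in \arg\max_a Q^*(s,a)$ for all sufficiently large $i$ and every $s$. The within-interval a.s. convergence argument used inside Lemma \ref{lemma:1} transfers this conclusion to the non-target iterates, so $\hat{\pi}^t(s)$ is eventually optimal almost surely.

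Once $\hat{\pi}'_i$ equals $\pi^*$ from some finite index $i_0$ onward, the GVF target updates become an approximate policy-evaluation sequence for the fixed policy $\pi^*$ under the $\gamma$-contraction $B_F^{\pi^*}$. Iterating the second bound of Lemma \ref{lemma:1} gives $\limsup_{i\to\infty}|\hat{Q}'_{F,i} - Q_F^{\pi^*}|_\infty \leq \epsilon'/(1-\gamma)$, and repeating the within-interval stochastic-approximation argument of Lemma \ref{lemma:1} transfers the same bound to $\hat{Q}_F^t$. Picking $\epsilon' = \min\{\epsilon(1-\gamma),\,\delta(1-\beta)/2\}$ and the corresponding $K$ from Lemma \ref{lemma:1} simultaneously satisfies the policy-matching requirement and the advertised $\epsilon$-bound on the GVF.

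The main obstacle I anticipate is the treatment of ties: if $\arg\max_a Q^*(s,\cdot)$ is not a singleton at some state, then $\delta = 0$, the action-gap argument collapses, and even $Q_F^{\pi^*}$ is ambiguous because distinct deterministic optimal policies can induce distinct GVF vectors. Either the theorem should be read with an implicit uniqueness assumption on the optimal action at every reachable state, or the conclusion ``$\hat{\pi}^t(s) \to \pi^*(s)$'' must be interpreted as ``$\hat{\pi}^t(s)$ eventually lies in $\arg\max_a Q^*(s,a)$'' with $\pi^*$ taken to be whichever deterministic optimal selection the greedy tie-breaking rule of the algorithm settles on. Once this interpretive issue is pinned down, the rest is routine contraction-mapping bookkeeping on top of Lemma \ref{lemma:1}.
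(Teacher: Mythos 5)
Your proof is correct and follows essentially the same route as the paper's: invoke Lemma \ref{lemma:1} to recast the target updates as approximate value iteration and approximate policy evaluation, force the greedy policy to become optimal after finitely many target intervals via a gap argument, and then treat the remaining GVF updates as approximate evaluation of the fixed policy $\pi^*$ under the $\gamma$-contraction, yielding the $\epsilon'/(1-\gamma)$ bound. The one place you diverge is the middle step: the paper bounds the \emph{value loss} of the greedy policies, $\limsup_i |V^* - V^{\hat{\pi}'_i}|_\infty \leq \frac{2\beta}{(1-\beta)^2}\epsilon$, by citing Bertsekas' approximate-VI result, and defines its gap $\delta$ over value functions of suboptimal policies; you instead contract $\hat{Q}'_i$ directly toward $Q^*$ to get $\limsup_i|\hat{Q}'_i - Q^*|_\infty \leq \epsilon'/(1-\beta)$ and use the $Q^*$ action gap. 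Your version is slightly more elementary and self-contained (no external citation needed) and gives a cleaner constant; both are standard and both work. Your closing caveat about ties in $\arg\max_a Q^*(s,\cdot)$ is well taken --- the paper's proof has the same implicit assumption (its claim that $|V^*-V^{\hat{\pi}}|_\infty \le \delta$ implies $\hat{\pi}=\pi^*$ only identifies the policy up to optimality, and $Q_F^{\pi^*}$ is indeed only well defined once a particular optimal selection is fixed) --- so flagging it as an interpretive assumption rather than a defect of your argument is the right call.
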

\begin{proof}
From Lemma \ref{lemma:1} we can view ESP-Table as performing approximate Q-value iteration, with respect to the sequence of target functions $\hat{Q}'_i$. That is, the total updates done during a target interval define an approximate Bellman backup operator $\hat{B}$, such that $\hat{Q}'_{i+1} = \hat{B}[\hat{Q}'_i]$. Specifically, there exists a $K$, such that the approximate operator is $\epsilon$-accurate, in the sense that for any Q-function $Q$, $\left|\hat{B}[Q] - B[Q]\right|_{\infty} \leq \epsilon$. 

Let $\hat{B}^i[Q]$ denote $i$ applications of the operator starting at $Q$ so that $\hat{\pi}'_i$ is the greedy policy with respect to $\hat{B}^i[\hat{Q}'_0]$. Prior work \citep{bertsekas1996} implies that for any starting $Q$, the sub-optimality of this greedy policy is bounded in the limit. 
\begin{equation}
\label{eq:avi}
    \limsup_{i\rightarrow \infty} \left|V^* - V^{\hat{\pi}'_i}\right|_{\infty} \leq \frac{2\beta}{(1-\beta)^2}\epsilon
\end{equation}
where $V^*$ is the optimal value function and $V^{\pi}$ is the value function of a policy $\pi$.

Now let $$\delta = \min_{\pi} \min_{s : V^*(s) \not= V^{\pi}(s)} |V^*(s) - V^{\pi}(s)|$$ be smallest non-zero difference between an optimal value at a state and sub-optimal value of a state across all non-optimal policies. From this definition it follows that, if $\left|V^* - V^{\hat{\pi}^i}\right|_{\infty}\leq \delta$, then $\hat{\pi}^i = \pi^*$. From Equation \ref{eq:avi} this condition is achieved in the limit as $i \rightarrow \infty$ if we select $\epsilon < \frac{(1-\beta)^2}{2\beta}\delta$. Let $K_1$ be the finite target interval implied by Lemma \ref{lemma:1} to achieve this constraint on $\epsilon$. Since Lemma \ref{lemma:1} holds with probability 1, we have proven that $\hat{\pi}'_i$ converges almost surely to $\pi^*$ for a finite $K_1$. This implies the first part of the theorem.

For the second part of the theorem, similar to the above reasoning, Lemma \ref{lemma:1} says that we can view the target GVF $\hat{Q}'_{F,i}$ as being updated by an approximate Bellman GVF operator $\hat{B}^{\hat{\pi}'_i}_F$. That is, for any GVF $Q_F$ and policy $\pi$, $\left|{B}^{\pi}_F[Q_F] - \hat{B}^{\pi}_F[Q_F]\right|_{\infty} \leq \epsilon$. Further, it is straightforward to show that our approximate Bellman GVF operator satisfies an analogous condition to Equation \ref{eq:avi}, but for GVF evaluation accuracy in the limit. In particular, for any $\pi$ and initial $Q_F$, if we define $\hat{Q}^{\pi}_{F,i}$ to be the GVF that results after $i$ approximate backups the following holds.\footnote{This can be proved via induction on the number of exact and approximate Bellman GVF backups, showing that after $i$ backups the difference is at most $\epsilon\sum_{j=0}^{i-1} \gamma^j$ and then taking the limit as $i\rightarrow \infty$.}  
\begin{equation}
\label{eq:agvf}
    \limsup_{i\rightarrow \infty} \left|Q^{\pi}_{F} - \hat{Q}^{\pi}_{F,i}\right|_{\infty} \leq \frac{\epsilon}{(1-\gamma)}.
\end{equation}
Thus, for a fixed policy the approximate backup can be made arbitrarily accurate for small enough $\epsilon$. 

From the almost sure convergence of $\hat{\pi}'_i$, we can infer that there exists a finite $i^*$ such that for all $i > i^*$, $\hat{\pi}'_i = \pi^*$. Thus, if $K > K_1$, then after the $i^*$ target update the target policy will be optimal thereafter. At this point the algorithm enters a pure policy evaluation mode for fixed policy $\pi^*$, which means that the approximate GVF operator is continually being applied to $\pi^*$ across target intervals. From Equation \ref{eq:agvf} this means that in the limit as $i\rightarrow \infty$ we have that $$\limsup_{i\rightarrow \infty} \left|Q^{\pi^*}_{F} - \hat{Q}'_{F,i}\right|_{\infty} \leq \frac{\epsilon}{(1-\gamma)}.$$

Thus, we can achieve any desired accuracy tolerance in the limit by selecting a small enough $\epsilon$. Let $K_2$ be the target interval size implied by Lemma \ref{lemma:1} for that epsilon and let the target interval be $K=\max\{K_1,K_2\}$. This implies that using a target interval $K$, there is a finite number of target updates $i'$ after the first $i^*$ updates such that for all $i > i^*+i'$, $\hat{Q}'_{F,i}$ will achieve the error tolerance. This completes the second part of the proof.    
\end{proof}

\section{Tug of War Domain}
\label{sec:ToW}
In this section, we overview the real-time strategy (RTS) game, `Tug of War' (ToW), used for this study. Tug of War (ToW) is an adversarial two-player zero-sum strategy game we designed using Blizzard’s PySC2 interface to Starcraft 2. Tug of War is played on a rectangular map divided horizontally into top and bottom lanes as shown in Figure \ref{fig:ToW-ScreenShot}. The game is viewed from an omnipotent camera position looking down at the map. Each lane has two base structures; Player 1 owns the two bases on the left of the map, and Player 2 owns the two bases on the right. The game proceeds in 30 second waves. Before the next wave begins, players may select either the top or bottom lane for which to purchase some number of military-unit production buildings with their available currency.

\begin{figure}[htp]
    \centering
    \includegraphics[width=5cm]{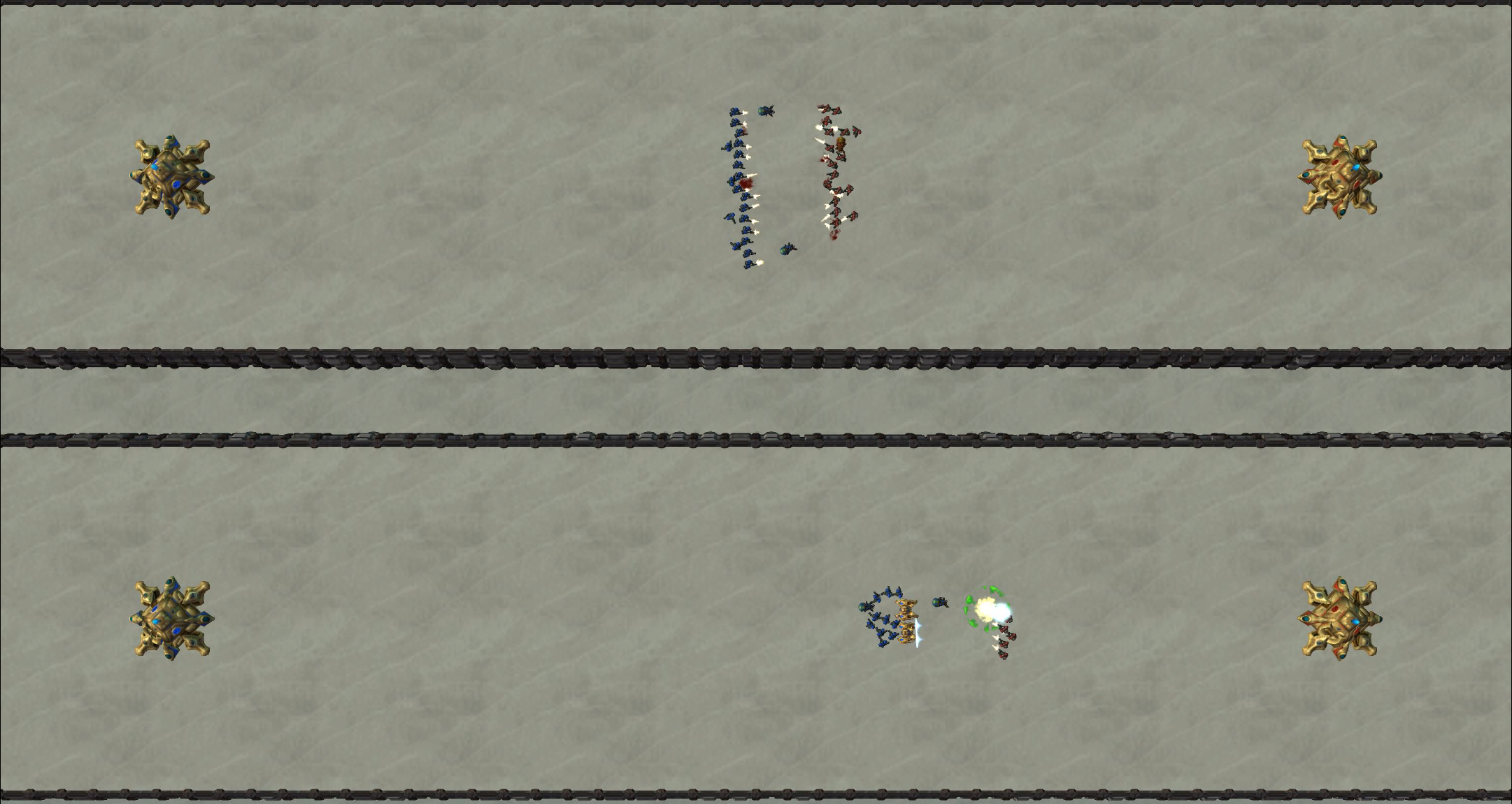} \includegraphics[width=3cm]{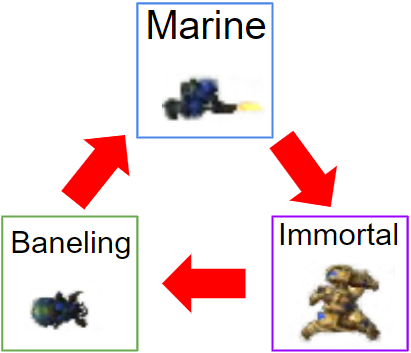}

    \caption{(left) Tug of War game map - Top lane and bottom lane, Player 1 owns the two bases on the left (gold star-shaped buildings), Player 2 owns the two bases on the right. Troops from opposing players automatically march towards their opponent's side of the map and attack the closest enemy in their lane. (right) Unit Rock Paper Scissors - Marines beats Immortals, Immortals beats Banelings, and Banelings beats Marines. We have adjusted unit stats in our custom Starcraft 2 map to befit ToW's balance.}
    \label{fig:ToW-ScreenShot}
\end{figure}

We have designed Tug of War allowing AI vs AI, Human vs Human, and AI vs Human gameplay. Watch a Human vs Human ToW game from Player 1's perspective here: \url{https://www.youtube.com/watch?v=krfDz0xjfKg}
%\url{https://youtu.be/cKMG5-XsIzc}.

Each purchased building produces one unit of the specified type at the beginning of each wave. Buildings have different costs and will require players to budget their capital. These three unit types, Marines, Immortals, and Banelings, have strengths and weaknesses that form a rock-paper-scissors relationship as shown in Figure \ref{fig:ToW-ScreenShot}. Units automatically move across the lanes toward the opponent's side, engage enemy units, and attack the enemy base if close enough. Units will only attack enemy troops and bases in their lane. If no base is destroyed after 40 waves, the player who owns the base with the lowest health loses.

Both Players receive a small amount of currency at the beginning of each wave. A player can linearly increase this stipend by saving to purchase up to three expensive economic buildings, referred to as a Pylon.

ToW is a near full-information game; players can see the all units and buildings up to the current wave. Both player’s last purchased buildings are revealed the moment after a wave spawns.
%ToW is a near full-information game, where players can see complete information about the other player’s units on the field, number of production buildings, health of bases, and number of pylons. The opposing player’s last purchased buildings are revealed the moment after a wave spawns. 
The only hidden information is the unspent currency the opponent has saved; one could deduce this value as the wave number, cost of each building, currency earned per wave, and the quantities of buildings up to the current snapshot are known. It would be difficult for a human to perform this calculation quickly.

Tug of War is a stochastic domain where there is slight randomness in how opposing units fight and significant uncertainty to how the opponent will play. Winning requires players assessing the current state of the game and balancing their economic investment between producing units immediately or saving for the future. Players must always be mindful of what their opponent may do so as to not fall behind economically or in unit production. Purchasing a Pylon will increase one's currency income and gradually allow the player to purchase more buildings, but players must be wary as Pylons are expensive, saving currency means not purchasing unit-production buildings which may lead to a vulnerable position.%Pylons are the only way to increase one's currency income, so it is advantageous to gain an economic advantage before the opponent as this slight edge will persist through the duration of the game since the time horizon is finite.
Conversely, if the opponent seems to be saving their currency, the player can only guess as to what their opponent is saving for; the opponent may be saving to purchase a Pylon or they may be planning to purchase a lot of units in a single lane. %Decisions and uncertainty introduces a free-form environment with a massive variety of interesting scenarios.

Tug of War presents a challenging domain to solve with Reinforcement Learning (RL). These challenges include a large state space, large action space, and sparse reward. States in ToW can have conceivably infinite combinations of units on the field, different quantities of buildings in lanes, or different base health. The number of possible actions in a state corresponds to the number of ways to allocate the current budget, which can range from 10s to 1000s. Finally, the reward is sparse giving +1 (winning) or 0 (losing) at the end of the game, where games can last up to 40 waves/decisions.

\subsection{Tug of War Feature Design}

While humans need continuous visual feedback to interact with video games, computer systems can use simple numeric values received in disjointed intervals to interpret game state changes. We have designed an abstract "snapshot" of the ToW game state at a single point in time represented as a 68 dimensional feature vector. Note that for this study, we have increased added additional features to capture granular details, thus bringing the total to 131 features. At the last moment before a wave spawns, the AI agent receives this feature snapshot and uses it to select an action for the next wave. We call this moment a decision point. The decision point is the only time when the agent receives information about the game and executes an action; the agent does not continuously sample observations from the game. The agent's performance indicates this abstraction is sufficient for it to learn and play the game competently.

The state feature vector includes information such as the current wave number, health of all 4 bases, the agent's current unspent currency, the agent's current building counts in both top and bottom lanes, the enemy's last observed building counts in the top and bottom lanes, pylon quantities, and the number of troops in each grid of the 4 grid sections of the map as depicted in Figure \ref{fig:fourGrid}.  opponent's current unspent mineral count is not sent to the agent as this hidden information is part of the game's design. \label{gamestate} 

\begin{figure}[htp]
    \centering
    \includegraphics[width=4cm]{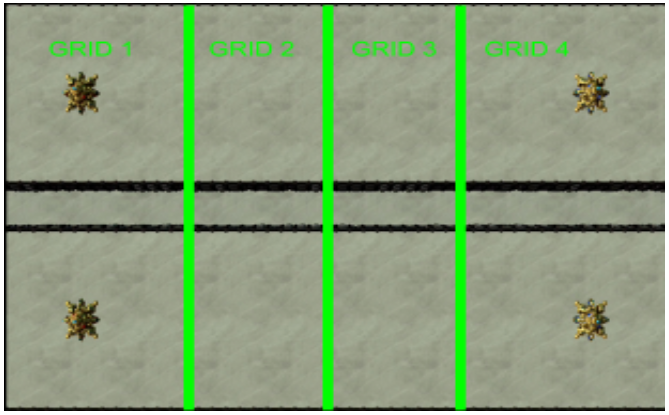}
    \caption{ToW 2 Lane 4 Grid - Unit quantities and positions on the map is descretized into four sections per lane. }
    \label{fig:fourGrid}
\end{figure}

\section{Agent Details: Hyperparameters and Architectures}
\label{sec:detail-ESP-agents}

The ESP agent code is provided in Supplementary Material, including pre-trained models for all domains we present. 

Table \ref{tab:hyper-parameters} gives the hyperparameters used in our implementation. Note that our implementation of ESP-DQN supports both hard target updates as shown in the pseudo-code and ``soft target updates" \citep{lillicrap2015}, where at each step the target network parameters are gradually moved toward the currently learned parameters via a mixing proportion $\tau$. We found that this can sometimes lead to more stable learning and use it in two of our domains as indicated in the table.

Table \ref{tab:Net detail} presents our GVF network structures used to train the agents in each domain. The choice of activation function for each GVF output component is based on the output type of the component. For GVF outputs in $[0,1]$ we use a sigmoid activation, for sets of mutually exclusive indicator GVFs (e.g. the win condition) we use a softmax activation over the set, and for GVF outputs with arbitrary numeric ranges we use a linear activation. Specifcially, we use Sigmoid functions on F1 through F12 and F17 features for our Tug of War ESP-DQN 17-feature agent and on F131 for our Tug of War ESP-DQN 131-feature agent because the data ranges $(0,1)$. We apply a SoftMax function to features F13 to F16 and F1 to F8 for our our Tug of War ESP-DQN 17-feature and 131-feature agents because said features correspond to probabilities that sum to 1.

\begin{table}[h]
    \centering
    \begin{tabular}{l | l | l | l}
    Hyper-Parameters & Lunar Lander & Cart Pole & Tug-of-War(both)\\
    \hline \hline
    Discount factors($\gamma$ and $\beta$) & 0.99 & 0.99 & 0.9999\\
    Learning Rate($\alpha$) & $10^{-4}$ & $10^{-5}$ & $10^{-4}$ \\
    Start Exploration($\epsilon_s$) & 1.0 & 1.0 & 1.0 \\
    Final Exploration($\epsilon_f$) & 0.01 & 0.05 & 0.1 \\
    Exploration Decrease(linearly) Steps & $2 * 10^5$ & $2 * 10^5$ & $4 * 10^4$\\
    Batch Size & 128 & 128 & 32\\
    Soft/Hard Replace & Soft & Soft & Hard\\
    Soft Replace($\tau$) & $5 * 10^{-4}$ & $5 * 10^{-4}$ & N/A \\
    Hard Replace Steps & N/A & N/A & $6 * 10^3$\\
    GVF Net Optimizer & Adam & Adam & Adam\\
    Combiner Net Optimizer & SGD & SGD & SGD\\
    Training Episodes & $5 * 10^3$ & $10^4$ & $1.3 * 10^4$\\
    Evaluation Intervals & $200$ & $100$ & $40$\\
    Evaluation Episodes & 100 & 100 & 10\\
    Riemann approximation steps of IGX\footnote{The range between 20 and 300 steps are enough to approximate the integral (within 5\%)\citep{sundararajan2017}} & 30 & 30 & 30\\
    \end{tabular}
    \caption{Hyper-parameters and optimizers used to train our ESP-DQN and DQN agents on Lunar Lander, Cart Pole and Tug of War.}
    \label{tab:hyper-parameters}
\end{table}

\begin{table}[h]
    \centering
    \begin{tabular}{l | l | l | l | l}
     & Lunar Lander & Cart Pole & Tug-of-War(17f) & Tug-of-War(131f)\\
    \hline \hline
    GVF Net & 3 layers MLP& 3 layers MLP & 4 layers MLP & 4 layers MLP\\
    \hline
    GVF Output Activation & Linear & Linear & Sigmoid(F1-F12, F17) & SoftMax(F1-F8)\\
    Function & & &SoftMax(F13-F16) & Sigmoid(F131)\\
 %    & & &Sigmoid(F17) &\\
    \hline
    Combiner Net & 3 layers MLP& 3 layers MLP & 4 layers MLP & 4 layers MLP\\
    \hline
    \end{tabular}
    \caption{Network structures we used to train our ESP-DQN and DQN agents on Lunar Lander, Cart Pole and Tug of War.}
    \label{tab:Net detail}
\end{table}

\section{Tug of War 131 Features} 
\label{sec:Tug-of-War-131-Features}
We introduce a detailed description of the 131 features used to train our Tug of War ESP-DQN agent. These features capture events in ToW, namely: %These features are separated into 7 parts: 
\begin{itemize}
    %\item Game ending win-condition indicators specifying which base will be destroyed and which base had the lowest health.
    \item Game ending win-condition probabilities; The likely-hood for each base to be destroyed or have the lowest HP at wave 40.
    \item P1 and P2 currency; These features allow GVFs to predict the amount of money players will receive in the future.
    \item Quantity of units spawned. 
    %The number of each type of units both player spawned; allowing GVFs to predict the amount of each type units both player will spawn to the end of the game %, which will reveal player's strategy
    \item The number of each type of units will be survive at different ranges
    \footnote{In addition to the 4 grid map regions as explained in Figure \ref{fig:fourGrid}, we add a 5th map region (Grid 5) to detect units attacking bases. Grid 5 for P1 is indicates the quantity of P1 units attacking P2's bases. This is reversed for P2, where now Grid 1 for P2 indicates P2 units attacking P1's bases.
    }
    %We add up Grid 5 to the 4 grids of the map as shown in Figure \ref{fig:fourGrid}(see section \ref{sec:ToW}) only for GVF features. The Grid 5 of player 1 is for those units are attacking the player 2's base. For player 2, the grids are reversed, so the Grid 1 of player 2 is for those units are attacking the player 1's base
    we defined on the map for both players; allowing the GVFs to predict the advantage of each lane of each type of unit in the future. 
    
    \item Delta damage to each of the four bases by each of the three unit types. These features allow GVFs to predict the amount of damage each unit type will inflict on the opponent's base in the unit's respective lane.
    \item The amount of damage inflicted by which type of units on another type of units for both players, like the damage the friendly Marine inflicted on enemy immortal; Allows the GVFs to predict the amount damage for each type of units inflicting on each type of units.%, which reveals if the agent understand the rock-scissors-paper rule of the game. 
    \item An indicator of whether the game reaches waves of tie-breaker.
\end{itemize}

%eval 2 features

%Mineral earning (corresponding to 1) above)
        
%The number of self-units and enemy-unit will be spawned (corresponding to 2), 3) above)

%The accumulative number of each type of unit in each range from now to the end of the game (corresponding to 11) above and enemy version of 11) above)
        
%Damage of each friendly unit to each Nexus  (corresponding to 5) above and enemy version of 5) above)
        
%The number of which friendly troops damage which enemy troops (corresponding to 8) above), and The number of which enemy troops damage which friendly troops (corresponding to 9) above)

%Prob end by the wave limitation (corresponding to 6) above)

\section{Example Explanations}
\label{sec:appendix-case-study}

\begin{figure}[t]
     \begin{subfigure}[t]{0.3\textwidth}
         \centering
         \includegraphics[width=\textwidth]{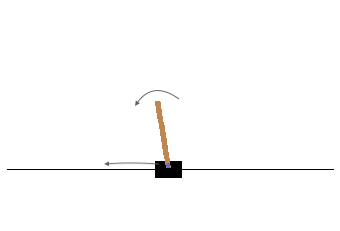}
         \caption{Game State}
         \label{fig:CP_game_1 state}
     \end{subfigure}
     \hfill
     \begin{subfigure}[t]{0.3\textwidth}
         \centering
         \includegraphics[width=\textwidth]{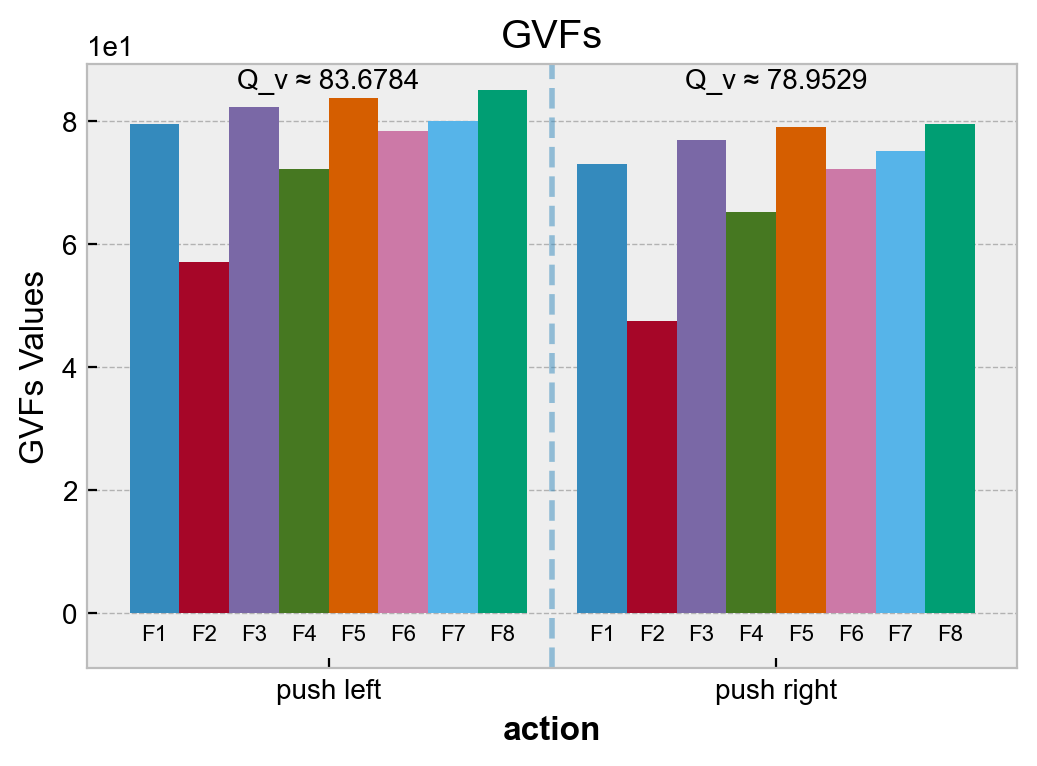}
         \caption{GVFs}
         \label{fig:CP_game_1_GVFs}
     \end{subfigure}
     \hfill
     \begin{subfigure}[t]{0.3\textwidth}
         \centering
         \includegraphics[width=\textwidth]{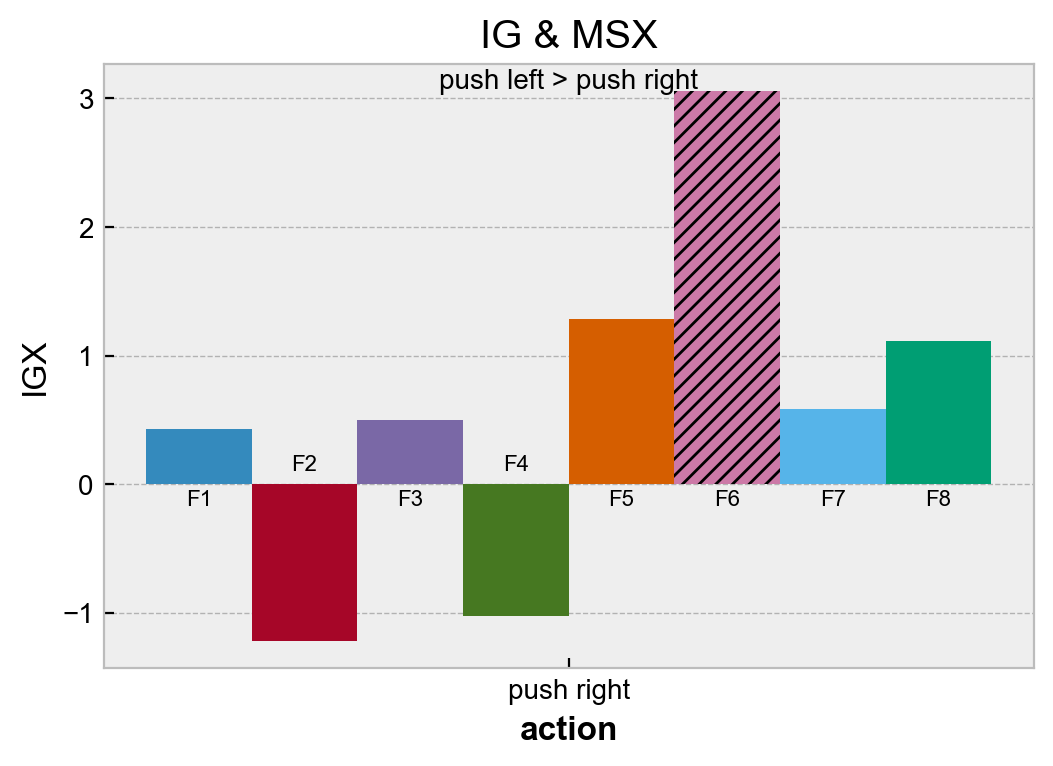}
         \caption{IG and MSX}
         \label{fig:CP_game_1_IG_MSX}
     \end{subfigure}
     \begin{subfigure}[]{\textwidth}
         \centering
         \includegraphics[width=\textwidth]{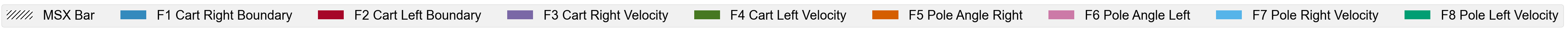}
     \end{subfigure}

\caption{Explanation example for Cart Pole. Three Figures show the game state, the Q-valuesand GVF predictions for actions, and the IGX and MSX respectively.}
\label{fig: CP_game_appedix}
\end{figure}
{\bf Cart Pole.} Figure \ref{fig:CP_game_1 state} shows a Cart Pole state encountered by a learned near-optimal ESP policy, where the cart and the pole are moving in the left direction with the pole angle being in a dangerous range already. The action ``push left" is preferred over ``push right", which agrees with intuition. We still wish to verify that the reasons for the preference agree with our common sense. From the IGX and MSX in Figure \ref{fig:CP_game_1_IG_MSX} the primary reason for the preference is the ``pole angle left" GVF, which indicates that pushing to the right will lead to a future where the pole angle spends more time in the dangerous left region. Interestingly we see that ``push right" is considered advantageous compared to ``push left" with respect to the left boundary and left velocity features, which indicates some risk for push left with respect to these components. All of these preference reasons agree with intuition and along with similar examples can build our confidence in the agent.

\begin{figure}[t]
     \centering
     \begin{subfigure}[t]{0.3\textwidth}
         \centering
\includegraphics[width=\textwidth]{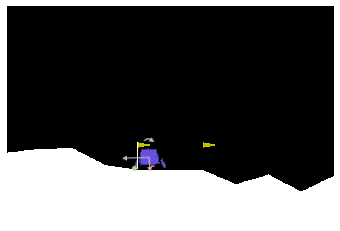}
         \caption{Game State}
         \label{fig:LL_game_2 state}
     \end{subfigure}
     \hfill
     \begin{subfigure}[t]{0.3\textwidth}
         \centering
         \includegraphics[width=\textwidth]{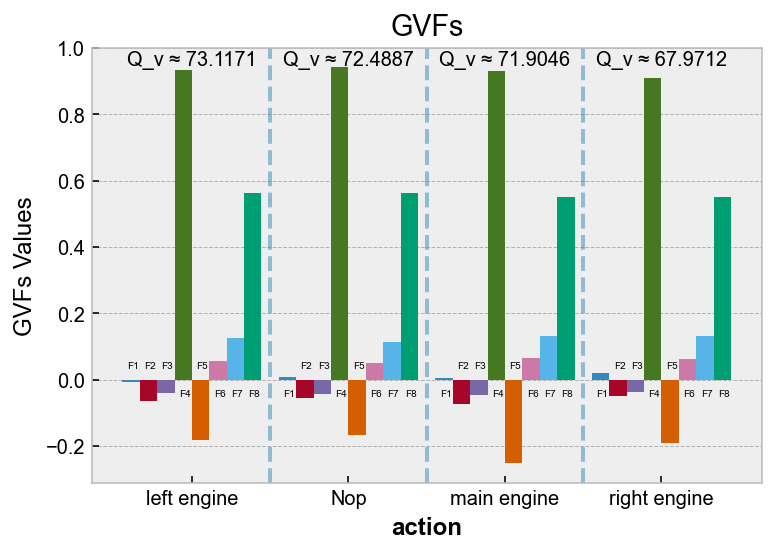}
         \caption{GVFs}
         \label{fig:LL_game_2 GVFs}
     \end{subfigure}
     \hfill
     \begin{subfigure}[t]{0.3\textwidth}
         \centering
         \includegraphics[width=\textwidth]{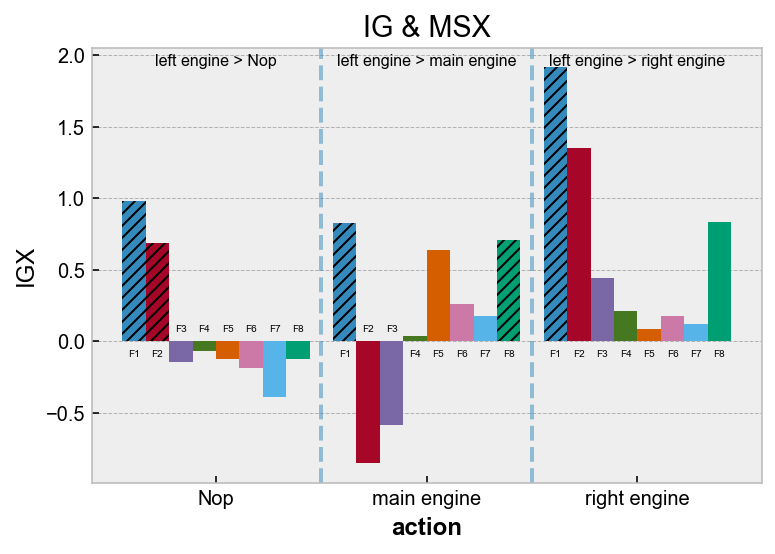}
         \caption{IG and MSX}
         \label{fig:LL_game_2 IG_MSX}
     \end{subfigure}
     \begin{subfigure}[]{\textwidth}
         \centering
         \includegraphics[width=\textwidth]{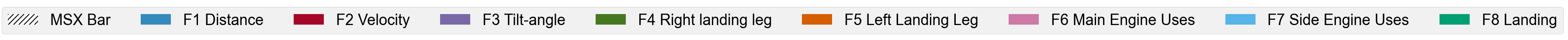}
     \end{subfigure}

\caption{Explanation example for Lunar Lander. Three Figures show the game state, the Q-values and GVF predictions for actions, and the IGX and MSX respectively.}
\label{fig: LL_game_2}
% \vspace{-1em}
\end{figure}
{\bf Lunar Lander.} 
Figure \ref{fig:LL_game_2 state} illustrates a Lunar Lander state achieved by a near-optimal ESP policy. The lander is moving down to the left and is close to landing within the goal. Additionally, the left leg has touched the ground as marked by the green dot. Figure \ref{fig:LL_game_2 GVFs} shows the GVF values of all actions and expects the lander to land successfully with the right leg touching down. The GVFs values of the distance, velocity, and angle are small because the lander is close to the goal.

%Figure \ref{fig:LL_game_2 state} shows a state in Lunar Lander entered by a near-optimal ESP policy. The lander is close to the goal with moving to the left-down direction away from the goal. Also, the left leg have touched the ground already where the green dot marked.  If we only analyze the Figure \ref{fig:LL_game_2 GVFs} which shows the GVFs values of all actions, we can see that all actions can lead the rocket to land  the right leg successfully, and earn the reward of that and final landing reward. The small landing left leg value being negative indicates that the left leg probably will leave the ground and land again repeat at lease once.  The GVFs values of the distance, velocity, and angle are very small since the rocket is close to the goal.

%Figure \ref{fig:LL_game_2 IG_MSX} shows the IGXs explaining why the agent prefers the "left engine" action over other actions, although the rocket can land safely after taking any action.
Although this state allows the lander to successfully land after taking any action, the IGX shown in Figure \ref{fig:LL_game_2 IG_MSX} illustrates the agent prefers "use left engine".
This is because using the right engine will increase the velocity of the lander, pushing it towards the left and increasing "F1 Distance" from the goal. This action and justification makes intuitive sense as the lander is unlikely to fail in this state and has chosen an action that reduces its velocity and decreases its landing delay.
%This is because "distance", as is shown by the MSX bar, agrees with intuition since firing the "right engine" will increase the velocity of rocket, which is the secondary largest IGX, to the left direction and then push it more far away from the goal. That also cause landing delay which is the third largest IGX. 

%Figure \ref{fig:LL_game_2 IG_MSX} shows the IGXs which explain why the agent prefer the "left engine" action over than other actions, although the rocket can land safely after taking anyone of them. The "right engine" is the one which has largest difference from the best action. The primary reason, "distance", is showed by MSX bar that agrees with intuition since firing the "right engine" will increase the velocity of rocket, which is the secondary largest IGX, to the left direction and then push it more far away from the goal. That also cause landing delay which is the third largest IGX. 

The "use main engine" action also delays the landing increases distance to the goal, as indicated by the MSX bars in Figure \ref{fig:LL_game_2 IG_MSX}. The IGX also shows the "use main engine" engine risks the left leg leaving the ground which agrees with intuition as moving up pushes the lander back into space. However, "use main engine" gives the lander another opportunity to adjust its velocity and angle. That may be why the IGX of velocity and tile-angle are negative. The "no-op" action has a lower preference than the best action because the lander is slightly drifting and may move out of the goal. Two largest IGXs of "noop" action agrees with this rationale. However, the IGX of landing is negative that may be arbitrary or indicates doing the "noop" action will lead the lander to land faster since the lander is moving down already, but sometimes landing faster gets less reward because moving to the center of the goal can gain more reward by reducing the distance between the center of goal and lander. %Reducing the velocity gets more reward as well. So, that can be another reason of why the two largest IGXs are distance and velocity.

\begin{figure}[t]
     \centering
     \begin{subfigure}[t]{0.3\textwidth}
         \centering
\includegraphics[width=\textwidth]{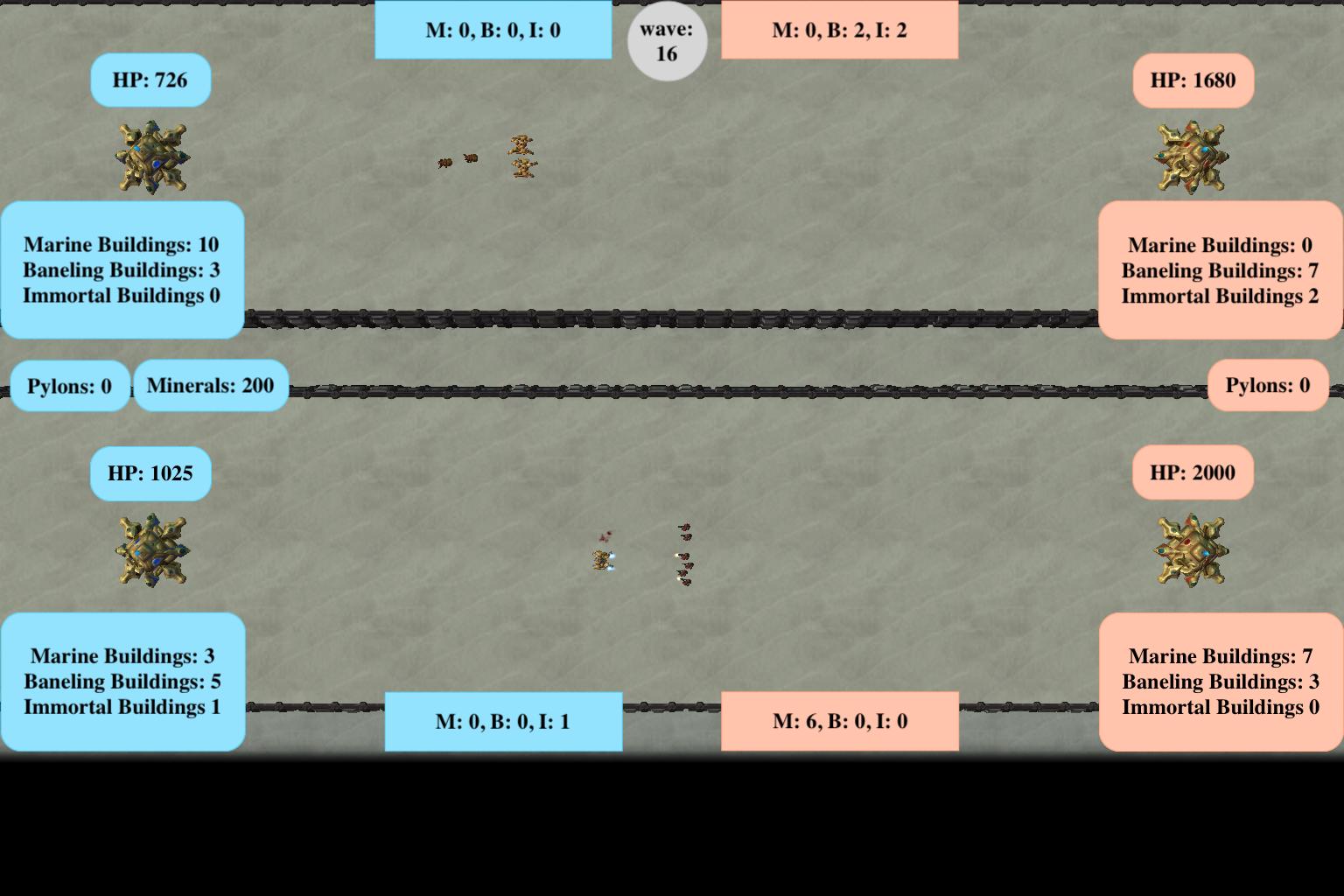}
         \caption{Game State}
         \label{fig:ToW_game_3 state}
     \end{subfigure}
     \hfill
     \begin{subfigure}[t]{0.3\textwidth}
         \centering
         \includegraphics[width=\textwidth]{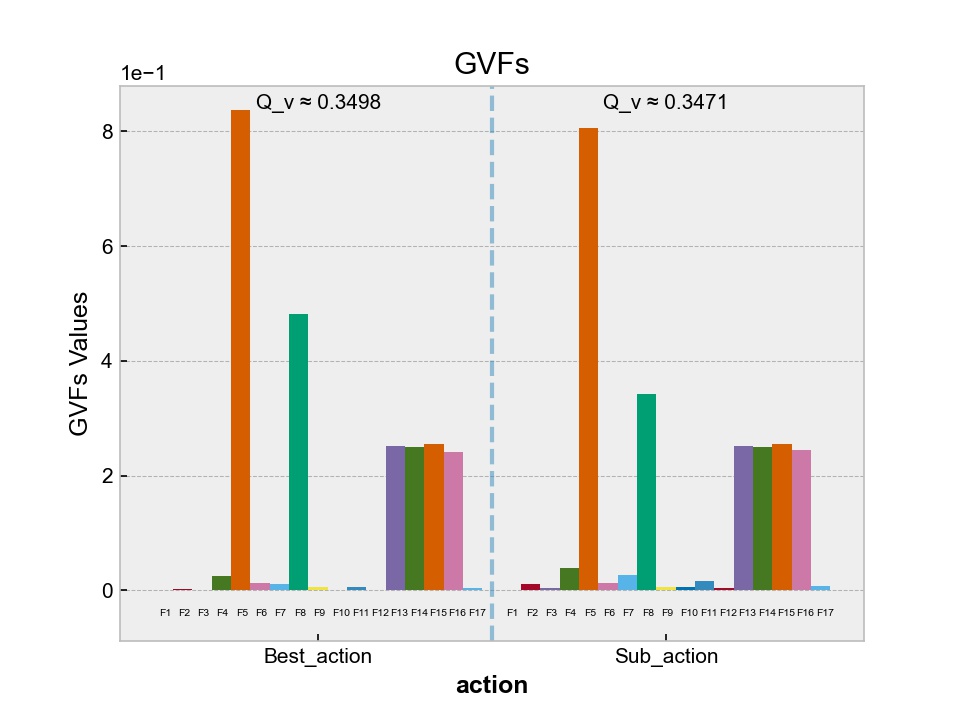}
         \caption{GVFs}
         \label{fig:ToW_game_3 GVFs}
     \end{subfigure}
     \hfill
     \begin{subfigure}[t]{0.3\textwidth}
         \centering
         \includegraphics[width=\textwidth]{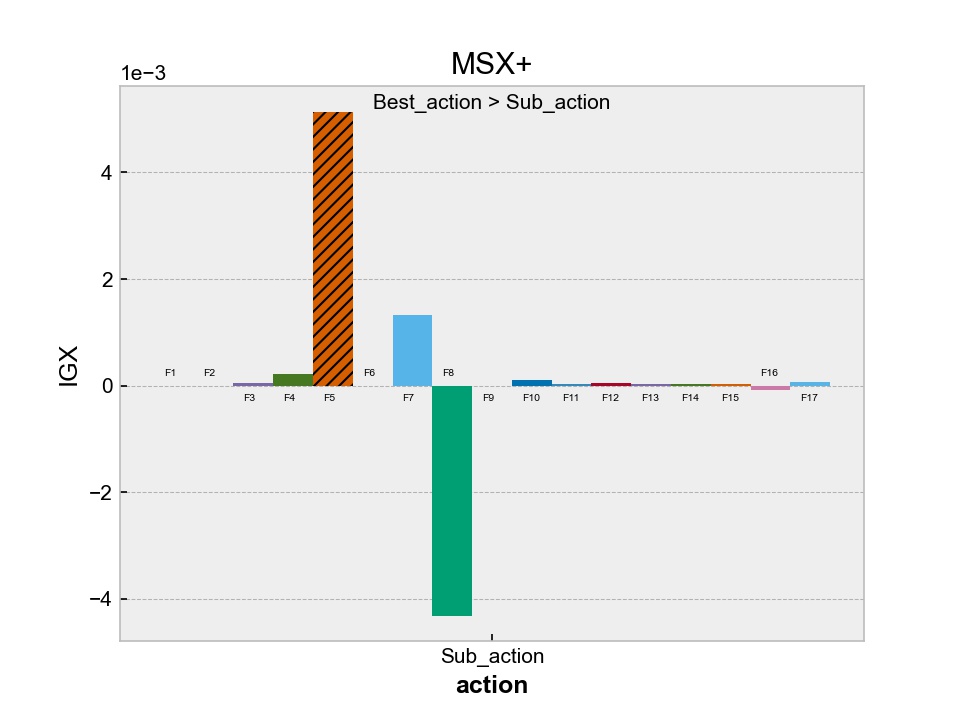}
         \caption{IG and MSX}
         \label{fig:ToW_game_3 IG_MSX}
     \end{subfigure}
     \begin{subfigure}[]{\textwidth}
         \centering
         \includegraphics[width=\textwidth]{Figures/CaseStudy_ToW/ToW_Legend.png}
     \end{subfigure}

\caption{Explanation example for Tug-of-War 17 feature ESP-DQN agent. Three Figures show the game state, the Q-values and GVF predictions for actions, and the IGX and MSX respectively. The top ranked action +2 Baneling in Bottom Lane and sub-optimal is +1 Immortals in Top Lane}
\label{fig: ToW_game_3}
% \vspace{-1em}
\end{figure}

{\bf Tug of War: 17 Features.} Figure \ref{fig:ToW_game_3 state} depicts a screenshot of a Tug of War game where our ESP agent (P1, blue) is playing against a new AI opponent (P2, orange) it has never encountered. The ESP agent's top base is destroyed after two waves thus losing the game.
%Figure \ref{fig:ToW_game_3 state} is from a game where the ESP agent plays against an opponent that it was not trained against and loses after two waves via having the top base destroyed.
The annotated game state shows the ESP agent doesn't have enough units to defend its top base as its opponent's banelings can kill almost all its units, and the agent's Top lane base has approximately 35\% hit points (HP) remaining. We can regard this state as a critical moment in the game because the agent spends all its money to defend the top lane and still looses the base in two waves after taking the its highest ranked action. Given our deep ToW game knowledge, we want to understand why the ESP agent chose to purchase Banelings in the Bottom Lane (arguably sub-optimal) rather than purchase Immortals in the Top Lane (intuitively a better action).
%Thus, we pick the sub-optimal action, that we think it is the best action that can defend good in this situation since it need Immortal to defend the large number enemy banelings, to compare to the action the agent picked. 

%To understand why the sub-optimal action is better, please see the detail description of the Tug-of-War game (see section \ref{sec:ToW})

To understand why the agent prefers the action that is worse than an action we intuitively recognize to be better, we analyze both action's GVFs (Figure \ref{fig:ToW_game_3 GVFs}), and IG \& MSX (Figure \ref{fig:ToW_game_3 IG_MSX}). The sub-optimal action's GVFs shows the sub-optimal action is expected to reduce damage from the enemy's top Banelings. This indicates the agent understands taking the sub-optimal action can pose a better defense. However, the MSX bar shows positive IGX of the self bottom Baneling damage still can cover the negative IGX of enemy top Baneling damage; indicating the agent is focusing on destroying the enemy's bottom base while ignoring the damage its top base will take. This misjudgement can be attributed to the agent over-fitting to its fixed-agent opponent during training. %This misjudgement can be attributed to the agent's training with a fixed enemy, and develop an over-fit policy that mis-predict the contribution of damage from top lane Banelings and Immortals.This explanation provide useful information for the AI engineer to adjust the training strategy they used.

\begin{figure}[h]
     \begin{subfigure}[]{\textwidth}
         \centering
        \includegraphics[width=0.3\textwidth]{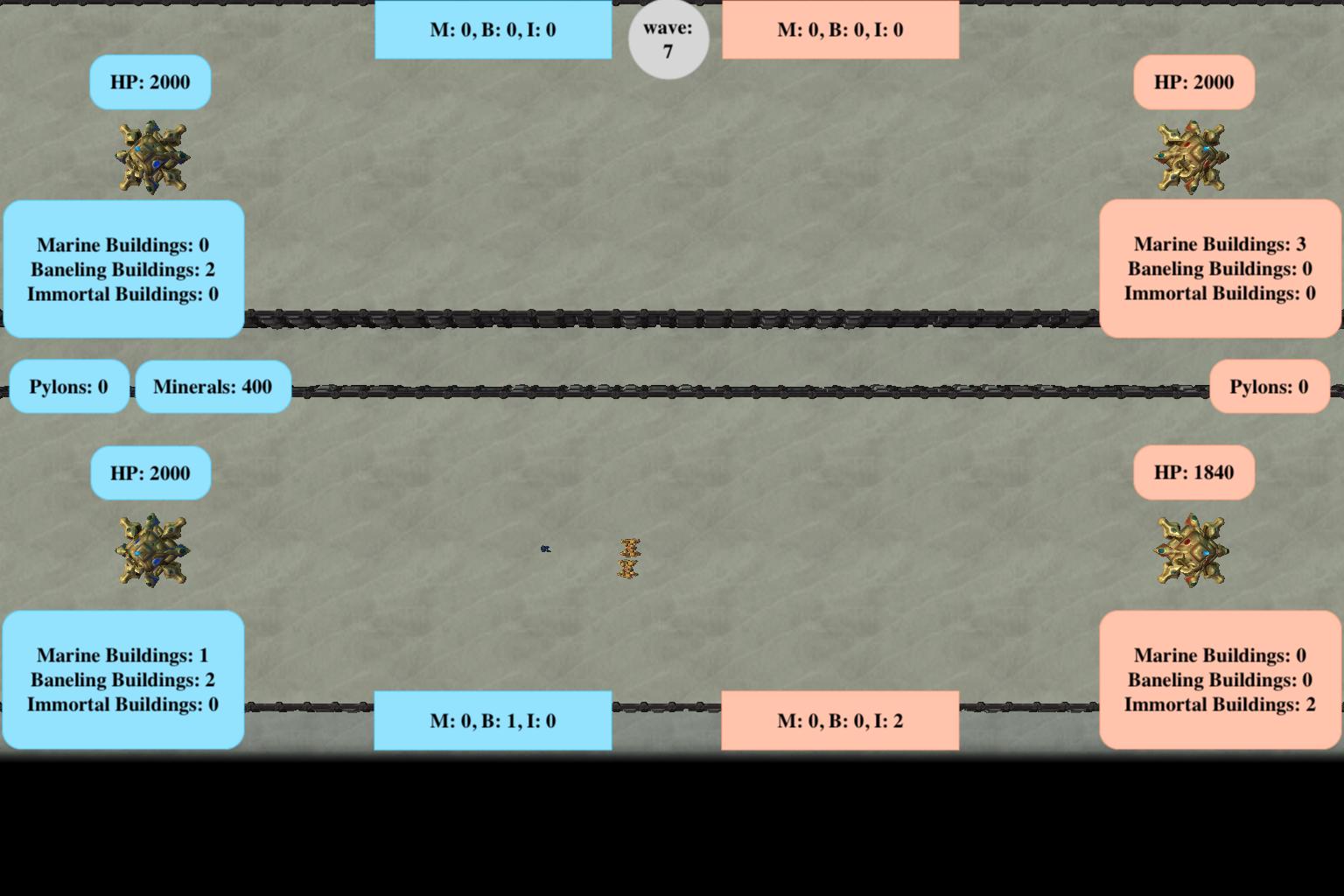}
        \caption{Game State}
        \label{fig:ToW_game_4 state}
    \end{subfigure}
% \vspace{-1em}
\end{figure}

\begin{figure}[h]\ContinuedFloat
     \begin{subfigure}[]{\textwidth}
         \centering
        \includegraphics[width=\textwidth]{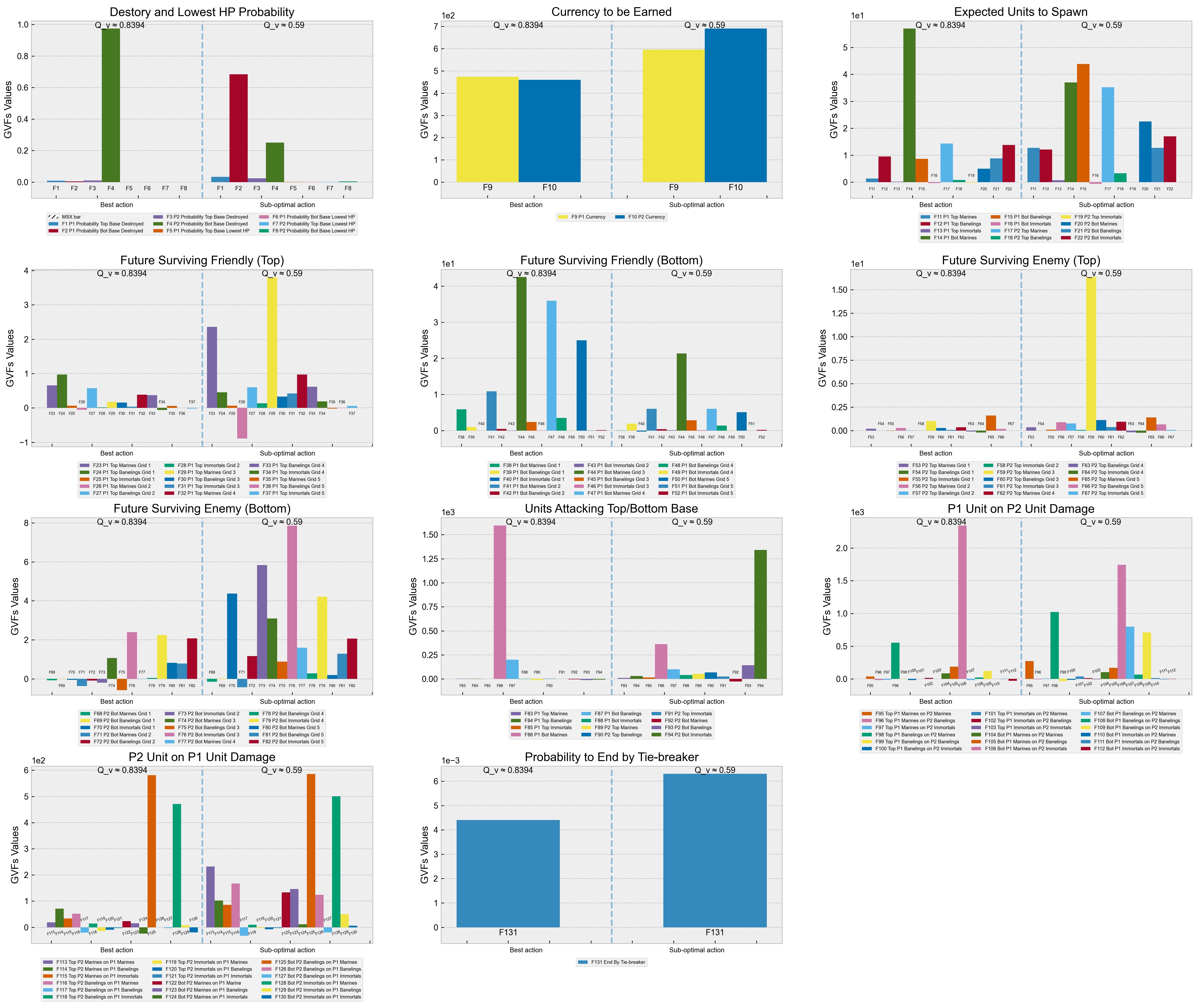}
        \caption{GVFs}
        \label{fig:ToW_game_4 GVFs}
    \end{subfigure}
% \vspace{-1em}
\end{figure}

\begin{figure}[h]\ContinuedFloat
     \begin{subfigure}[]{\textwidth}
         \centering
        \includegraphics[width=\textwidth]{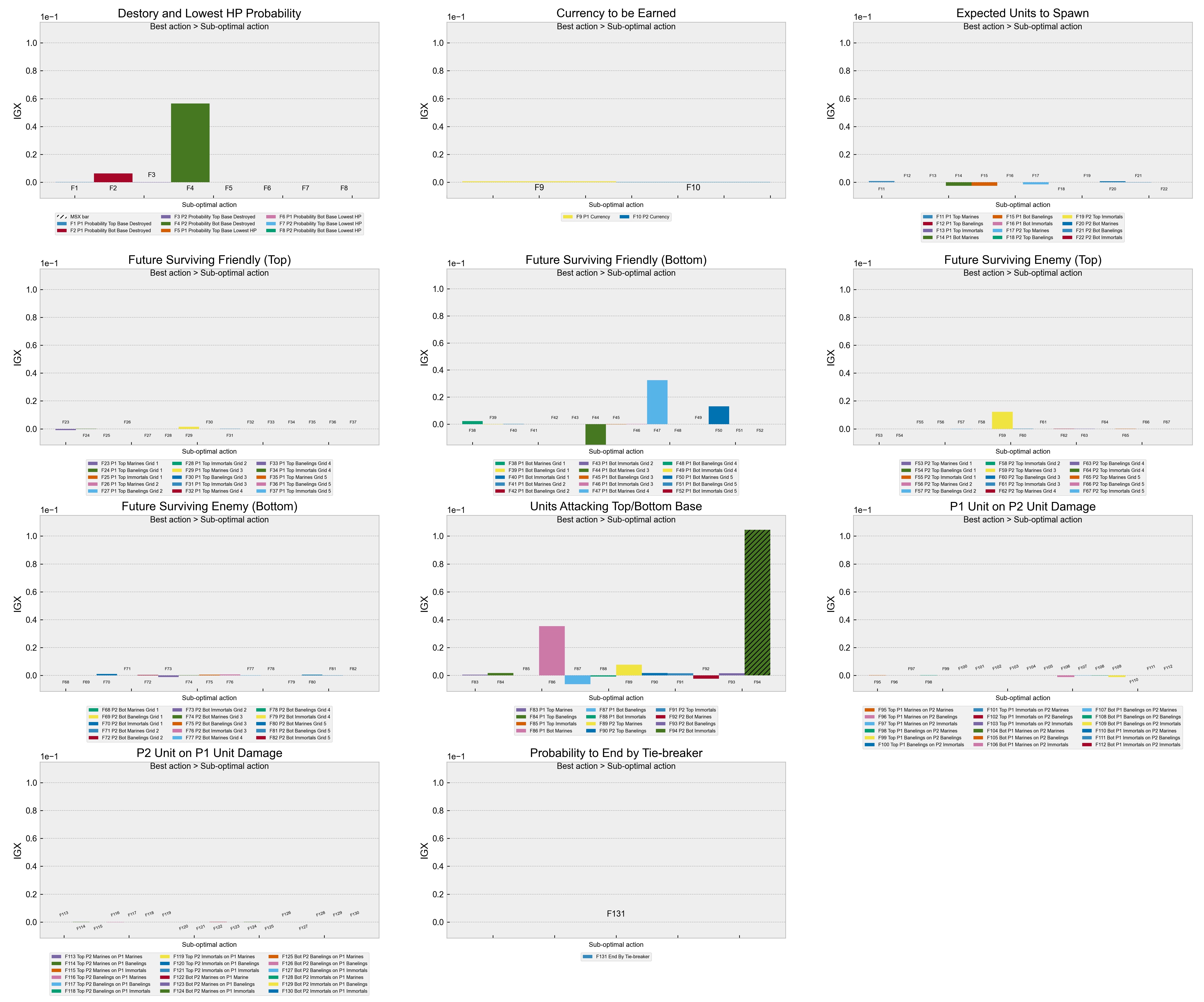}
        \caption{IG and MSX}
        \label{fig:ToW_game_4 IG_MSX}
    \end{subfigure}
    
\caption{Explanation example for Tug-of-War 131 feature ESP-DQN agent. Since there are too much features to show as one figure, we separate them into 11 clusters. Three Figures show the game state, the Q-values and GVF predictions for actions, and the IGX and MSX respectively. The top ranked action +8 Marines in Bottom Lane and sub-optimal is +5 Banelings in Bottom Lane.}
\label{ToW_game_4}
% \vspace{-1em}
\end{figure}

{\bf Tug of War: 131 Features.} Figure \ref{fig:ToW_game_4 state} depicts a screenshot of a Tug of War game where our (P1, blue) ESP agent is playing against the same fixed-policy AI opponent (P2, orange) it was trained against. The ESP agent wins by destroying the opponent's bottom base. %Figure \ref{fig:ToW_game_4 state} is from a game where the ESP agent plays against an opponent that the same agent it was trained against and wins via having the opponent's bottom base destroyed.
The state in Figure \ref{fig:ToW_game_4 state} indicates both players have a balanced quantity of units in the top lane. We also observe P2 has an advantage in the bottom lane as the ESP agent doesn't have enough units to defend. The ESP agent has determined its best action is to spend all its money on producing +8 Marine buildings in the Bottom Lane to defend, which agrees with intuition as Marines counter Immortals. To justify why one can regard this choice as optimal, we compare the agent's best-determined action, +8 Marine buildings in Bottom Lane, to a sub-optimally ranked action, +5 Baneling buildings in Bottom Lane, due to Immortals counter Banelings.
%Thus, for the same sense, the interesting comparison is what if the agent cost all currency to produce the units, which is Baneling, the Immortals counter for? That is the sub-optimal action.

Figure \ref{fig:ToW_game_4 GVFs} shows the GVF value of both action. Given the dense nature of the 131 Features, we summarize the following:
\begin{itemize}
    \item The values concerning accumulated quantity features such as future currency to be earned are higher in the sub-optimal action than the best action because the game is expected to be prolonged if a sub-optimal action is taken. The probability to end the game by tie-breaker(F131) as shown in Figure \ref{fig:ToW_game_4 GVFs}, graph ``Probability to End by Tie-breaker" agrees taking the best action leads to a faster win.
    
    \item The sub-optimal action raises the probability of our ESP agent's bottom base getting destroyed(F2) and lowers the probability of the opponent's bottom base getting destroyed(F4). This assessment agrees with the game rules as Banelings do little to counter Immortals.
    
    \item Agent's Expected Bottom Marine to Spawn(F14) is higher of if it takes the best action, and Expected Bottom Baneling to Spawn(F15) is higher if takes sub-optimal action.
    
    \item By taken the best action, the agent expects its future surviving bottom marines to be closer to P2's bottom base(F44, F47 and F50); indicating the agent's units are able to push the enemy back. Contrasted to the sub-optimal action, where the opponent's surviving bottom Immortal is expected to be closer to the ESP agent's bottom base(F70, F73 and F76), indicating the opponent pushed the agent back.
    %For the best action, the agent's Future Surviving Bottom Marine are closer to the opponent's base where is Grid 3, 4 and 5. That indicates the agent push the enemy back. For the sub-optimal action, the opponent's Future Surviving Bottom Immortal are closer to the agent's base where is Grid 1, 2 and 3. That indicates the opponent push the argent back. For the top lane Surviving, it doesn't give too much information since win probability point to the bottom battle. 
    \item If the ESP agent purchases +8 marines in the bottom lane (best ranked action), the agent expects to take no damage from the enemy(F89 to F94). This can be contrasted to the expected damage if the agent were to purchase +5 baneling buildings in the bottom lane (sub-optimal action) where the agent expects to take base damage from P2's immortals(F94) as shown in Figure \ref{fig:ToW_game_4 GVFs}, graph ``Units Attacking Top/Bottom Base".
    %Units Attacking Top/Bottom Base figure shows that for best action, the largest damage units inflicted is agent's Marines, which agrees with intuition, and for the sub-optimal action, the largest damage units inflicted is opponent's Immortals, which also agrees with intuition since the sub-optimal is not powerful to defend the Immortals and cost all the currency.
    \item We can validate the agent understands the rock-paper-scissor interaction between 
    marines, banelings, and immortals from the GVF graphs as shown in Figure \ref{fig:ToW_game_4 GVFs}, graph ``P1 Unit on P2 Damage" and ``P2 Unit on P1 Damage". If the agent produces marines, the ESP agent correctly expects to inflict a large amount of damage on P2's immortals. If the agent produces banelings, the ESP agent correctly expects to inflict a large amount of damage on P2's marines.
    
    \item There exist some flaws in the agent's GVF predictions. Some values such as Future Surviving Units in Figure \ref{fig:ToW_game_4 GVFs} should not be negative, indicating some flaw in the agent's training. This suggests an engineer can add a ReLU function on the output to prevent negative values. 
    %There are some flaws of the GVF prediction. That are some of values are negative but they shouldn't be, like Future Surviving Units. The flaws can be caused by that the agent doesn't train with those game state. This suggests the engineer can add ReLU function on those output that doesn't allow negative values.
\end{itemize}

Explanations produced by our ESP model are sound because said explanations do not depend on GVF comparisons alone. Figure \ref{fig:ToW_game_4 IG_MSX}, graph ``Units Attacking Top/Bottom Base" illustrates P2 Immortal Damage on bottom base (F94); the primary MSX contribution for why the agent ranked +8 marine buildings as its best action. Given the notion that P2's Immortals in the bottom lane presents a significant threat, producing marines to defend the immortals makes good intuitive sense. Banelings are a sub-optimal choice in this scenario, and would do little to defend against Immortals. We summarize the IG and MSX graph in Figure \ref{fig:ToW_game_4 IG_MSX} as follows,
%Our models focuse on the contribution of the comparison as shown in Figure \ref{fig:ToW_game_4 IG_MSX}.

\begin{itemize}
\item The best action adds more Marine buildings; thus increasing the quantity of marines spawned per wave, but the agent doesn't care about the quantity of marines(F14) as the IGX is close to 0. However, the agent cares about the damage the Marine inflict (F86), although this is not as important as defending against opponent's Immortals. %That indicate units inflicting damage is more important than the quantity of units.

\item Graph ``Destroy and Lowest HP Probability" illustrates the two mutually exclusive win types in ToW; winning by destroying one of P2's bases, or winning by making sure one of P2's base has the lowest HP at wave 40. The probability Base Destroyed IGX indicates the agent expects to destroy the opponent's bottom base(F4) and defend its own bottom base(F2).

\item Graph ``Future Surviving Friendly (Bottom)" illustrates the contribution of P1's surviving troops in the bottom lane. 
%The IGX indicates the agent believes pushing the opponent back is also important. 
The positive IGX contribution of feature ``P1 Bottom Marine Grid 4(F47)" and ``Grid 5(F50)" indicates the agent cares about its marines moving closer to the enemy's bottom base. The IGX of the ``P1 Bot Marine Grid 3(F44)" is negative, possibly because Grid 3 is too far from the opponent's base to be considered a disadvantage.

\end{itemize}

Given the large number of features, the MSX is critical to get a quick understanding of the agent's preference. In general, user interface design will be an important consideration when the number of features is large. Such interfaces should allow users to incrementally explore the IGX and GVFs of different actions flexibly and on demand. 

% That is the damage of opponent's Immortal inflicted on agent's bottom base (F94). That agrees with intuition since the opponent's Immortal is coming to the agent's bottom base. The best action is best strategy to defend the Immortals, and the sub-optimal waste currency to produce bunch of Banelings which can't defend the Immortals. We also can see the following sub-advantage after taking the best actions:

%Additionally, the MSX method is helpful for this situation because the feature number are too much.
% The MSX bar highlights the dominating features differentiating the best action from the sub-optimal action. There is only one MSX bar, the damage of opponent's Immortal inflicted on agent's bottom base (F94). That agrees with intuition since the opponent's Immortal is coming to the agent's bottom base. The best action is best strategy to defend the Immortals, and the sub-optimal waste currency to produce bunch of Banelings which can't defend the Immortals. We also can see the following sub-advantage after taking the best actions:

%\bibliographystyle{neurips_2020}

\end{document}